\theoremstyle{plain}
\newtheorem{theorem}{Theorem}[section]
\newtheorem{lemma}[theorem]{Lemma}
\newtheorem{corollary}[theorem]{Corollary}
\theoremstyle{definition}
\newtheorem{definition}[theorem]{Definition}
\theoremstyle{remark}
\icmltitlerunning{The Power of Random Features and the Limits of Distribution-Free Gradient Descent}
\begin{document}

\twocolumn[
\icmltitle{The Power of Random Features and \\the Limits of Distribution-Free Gradient Descent}



\icmlsetsymbol{equal}{*}

\begin{icmlauthorlist}
\icmlauthor{Ari Karchmer}{yyy}
\icmlauthor{Eran Malach}{yyy}

\end{icmlauthorlist}

\icmlaffiliation{yyy}{Harvard University, Cambridge, Massachusetts, USA}

\icmlcorrespondingauthor{Ari Karchmer}{akarchmer0@gmail.com}

\icmlkeywords{Machine Learning, Random Features, Gradient Descent}

\vskip 0.3in
]



\printAffiliationsAndNotice{} 

\begin{abstract}
We study the relationship between gradient-based optimization of parametric models (e.g., neural networks) and optimization of linear combinations of random features. Our main result shows that if a parametric model can be learned using mini-batch stochastic gradient descent (bSGD) without making assumptions about the data distribution, then with high probability, the target function can also be approximated using a polynomial-sized combination of random features. The size of this combination depends on the number of gradient steps and numerical precision used in the bSGD process. This finding reveals fundamental limitations of distribution-free learning in neural networks trained by gradient descent, highlighting why making assumptions about data distributions is often crucial in practice.
Along the way, we also introduce a new theoretical framework called average probabilistic dimension complexity (adc), which extends the probabilistic dimension complexity developed by \citet{kamath2020approximate}. We prove that adc has a polynomial relationship with statistical query dimension, and use this relationship to demonstrate an infinite separation between adc and standard dimension complexity.
\end{abstract}

\section{Introduction}

Theoretically, learning neural networks is computationally hard in the worst case. However, the practice of modern deep learning has demonstrated the remarkable power of gradient-based optimization of neural networks. Why does gradient descent over neural networks work so well in practice, despite theoretical pessimism? A full answer to this question is still missing in the science of deep learning.

One possible source of the computational hardness in theoretical studies is the goal of \textit{distribution-free} learning. In distribution-free learning, the goal is to design algorithms that can learn a target function $f$ from a class $\cF$ without making strong assumptions about the distribution of the input examples. Specifically, let $\rho$ be an arbitrary distribution over the input domain, and let $(x, f(x))$ be an example drawn from $\rho$. A distribution-free learning algorithm must succeed in approximating $f$ for any choice of $\rho$. This stands in contrast to distribution-specific learning, where the algorithm is designed with prior knowledge about the input distribution, often leading to more efficient and practical solutions. While distribution-free learning is a desirable goal, it is often computationally challenging, as it requires the algorithm to handle worst-case scenarios.

\paragraph{This work, in a nutshell.} In this work, we discover a surprising relationship between distribution-free learning with gradient descent over differentiable parametric model classes and random feature representations. Informally speaking, we show the following implication. Let $\cF$ be a class of target functions. Assume that a differentiable parametric model class can be successfully optimized in a distribution-free manner by mini-batch gradient descent to approximate all $f \in \cF$. Let $\mu$ be a \textit{target distribution} over $\cF$. Then, with high probability over $f \sim \mu$, $f$ can be approximated by a linear combination of random features. Importantly, the number of random features necessary to include in this linear combination is upper bounded by a \textit{polynomial} function of the number of optimization steps of gradient descent.

This result, as stated, sounds like a very strong positive result: could we now replace SGD on complex networks with optimization of simpler random features model? This is the positive interpretation of the result. However, we know that random features are limited in their expressive power. So, another (and we believe better) way to understand this result is as a statement on the limitation of distribution-free SGD.
Indeed, our result, more formally described on the next page, reveals a fundamental limitation of distribution-free learning: 
\begin{quote}
    If a function class is learnable in a distribution-free manner by gradient descent, then most functions in the class must have a relatively simple random feature representation. 
\end{quote}

This limitation has a clear connection to the practice of deep learning, as we discuss next. Specifically, our result provides a theoretical foundation for the observation that distributional assumptions are crucial for the success of gradient-based optimization.

\paragraph{The theoretical value of this result.} A common finding in the practice of Machine Learning is that making appropriate distributional assumptions often leads to dramatically better learning outcomes. For example, it has been shown both empirically and theoretically that, when examples are drawn from the uniform distribution, learning parity functions---Boolean functions that compute the parity (XOR) of a subset of input bits---can be extremely difficult for gradient descent on neural networks. However, the same task becomes \textit{much} easier under biased product distributions over the input domain (see e.g., \citet{malach2019learning}).

Parity functions are a cornerstone of computational learning theory due to the fact that, despite their simplicity, they exhibit exponential hardness under many learning frameworks (e.g., statistical queries and random features). Parity functions are also useful as informative special cases when studying difficult topics in deep learning theory. For example, parity functions have been recently used to study length complexity of chain-of-thought in autoregressive models \citep{malach2023auto}, hidden progress in gradient descent \citep{barak2022hidden}, and pareto frontiers in width, initialization and computation of gradient descent \citep{edelman2023pareto}. 

In contrast, our result helps \textit{explain} the phenomenon, instead of just \textit{demonstrating} it for the case of learning parity functions. We bring to light the fact that distribution-free learning with gradient descent essentially ``collapses'' to learning linear combinations of random features, in the average case. To ground this in the specific case of parity functions, we comment that random parities are well known to be computationally hard to learn using linear combinations of random features (even in the average case). Thus, our result provides a \textit{consistent} explanation for why distribution-\textbf{free} learning of parities with gradient descent is hard, in terms of the well-known hardness of learning parities with random features. Again, note that random parities are \textit{not} hard to learn with gradient descent in the distribution-\textbf{specific} case.

\paragraph{Main theorem in more detail.} 

Let us now describe the main theorem contributed by this work in more detail. We consider a differentiable parametric hypothesis class $\cH$, which has $p$ parameters $\mathbf{w} = (w_1 \cdots w_p)\in \real^p$ and a bounded output range. Let $f_\mathbf{w}$ be the function in $\cH$ corresponding to parameters set to $\mathbf{w}$. Suppose that data is generated according to a \textit{source distribution} $\cD_{f, \rho}$, which samples unlabeled data $x$ in a domain $X$, according to an unknown example distribution $\rho$, and labels the data with values in $\{\pm 1\}$, according to a target function $f: X \rightarrow \{\pm 1\}$.

Roughly, our main result is the following. Let $\cF$ be a class of target functions and let $\Delta(X)$ be the set of distributions over domain $X$. Suppose it is possible to distribution-freely
learn an accurate model $f_{\mathbf{w}^*} \in \cH$ for some unknown source distribution $\cD_{f, \rho} \in \{\cD_{f, \rho} : f \in \cF, \rho \in \Delta(X)\}$ (with respect to to squared loss, for any $f$ and $\rho$) by mini-batch SGD with parameters $T,c,p$. Then, given any prior distribution $\mu$ over $\cF$, there exists a random feature distribution $\cE$ such that:
\begin{itemize}
    \item For random features $\phi_1 \cdots \phi_d \sim \cE$, there exists a weight vector $\mathbf{v}$ such that 
    \[
        \sum\limits_{i\le d} \mathbf{v}_i\phi_i \approx f
    \]
    under 0/1 loss with respect to $\rho$, with probability $99/100$ (over $\mu$).
    \item It holds that $d \le \poly(Tp/c)$.
\end{itemize} 

Here, $T$ is the number of clipped mini-batch gradient updates, $p$ is the number of parameters in the differentiable model, $c \in (0,1]$ controls the granularity of the mini-batch gradient estimates. We note that the batch size must satisfy certain a lower bound roughly polylogarithmic in $Tp$ (for constant $c$). 

At the end of the technical overview (section \ref{sec:tech_overview}), we state the main theorem in full mathematical formality as Theorem \ref{thm:main_theorem}. 

\paragraph{Techniques and further contributions.}

To prove the main theorem, we use a series of transformations between different learning paradigms and complexity measures. 

\begin{itemize}
    \item First we use a transformation from bSGD algorithms (with a certain maximum precision---these can be thought of as noisy gradients) to statistical query (SQ) learning methods, which was proved by \citet{abbe2021power}. This allows us to then relate $\poly(Tp)$, a polynomial function of the product of parameters and bSGD steps, to the statistical query dimension of the class of target functions that label the source distribution. This is done using the characterization of \citet{blum1994weakly}.
    \item Following this, we use ideas from communication complexity, particularly those related to discrepancy and the 2-party norm of a function, to relate statistical query dimension to the probability of being able to \textit{weakly} approximate a source distribution by sampling a \textit{single} random feature (we call this the Random Feature lemma). This step employs new mathematical techniques that resemble the circuit learning algorithms of \citet{karchmer2024agnostic, karchmer2024distributional}.
    \item Next, we also introduce a new technique that uses \textit{boosting} techniques, such as Adaboost \citep{freund1997decision, domingo2000madaboost}, as constructive proofs of the fact that we can combine these weak approximators into a strong approximator: a linear combination of a relatively small number of random features. 
\end{itemize}

Along the way, we introduce a new notion of dimension complexity called \textit{average} probabilistic dimension complexity. This notion relaxes both the standard notion \citep{ben2002limitations}, and the probabilistic notion \citep{kamath2020approximate}. Average probabilistic dimension complexity precisely captures the quantity of interest in this paper: the number of features needed to approximate a given function class, with high probability, over some prior distribution over the function class. Thus, as a further contribution, we give an ``infinite'' separation between average probabilistic dimension complexity, and standard dimension complexity.  Previously, only an exponential separation was known between the intermediate notion, probabilistic dimension complexity, and standard dimension complexity, from \citet{kamath2020approximate}. \citet{kamath2020approximate} highlighted as an open problem to resolve whether or not an infinite separation exists between probabilistic and standard dimension complexity. Our relaxed notion of average probabilistic dimension complexity is sufficient for an affirmative resolution, but we also show that there may exist complexity-theoretic barriers to demonstrating that our relaxed notion is \textit{necessary} for the separation.





\section{Related Work}

Many related works analyze the relationship between gradient descent on wide neural networks and linear learning with random features (LLRF). Much of this line of work is based on the following idea, which we now illustrate using a simple one-layer neural network:

\begin{equation}
	R(x) \triangleq \sum\limits_{i=1}^d v_i \sigma(\langle \mathbf{w}_i, x\rangle + b_i)
\end{equation}
The $d$ neurons are identified by their weights $\mathbf{w}_i$ and bias $b_i$, $\sigma$ is an activation function, and the top layer applies a linear combination by weights $u_i$.

When $d$ is really large and the weights are randomly initialized, it can be shown that applying gradient descent over these parameters effectively changes the weight on the bottom layer, $\mathbf{w}_i, b_i$ by a tiny amount \citep{chizat2019lazy}. This means that when gradient descent for a small bounded number of steps is used to learn $R$, essentially, the bottom layers could have remained fixed after random initialization. Thus, in this case, learning the overparameterized network is effectively the same as learning the linear combination of some random features of the form $\sigma(\langle \mathbf{w}_i, x\rangle + b_i)$, for which known techniques suffice to prove optimal convergence. All in all, this allows \citet{andoni2014learning, daniely2017sgd, du2018gradient, du2019gradient, li2018learning, allen2019learning} (among others) to derive provable guarantees for learning overparameterized neural networks by analyzing them as one would analyze linear learning with random features (LLRF). 

It is worth mentioning that the Neural Tangent Kernel (NTK, \citet{jacot2018neural}) is a bit more subtle. The NTK approach goes beyond considering the features of the top layer. Instead, NTK constructs a kernel matrix where each entry represents the dot product between the gradients of the network output with respect to the parameters, evaluated at two different data points. This kernel matrix effectively encodes the similarity between data points in the high-dimensional gradient feature space (the feature map representation is the gradient of the neural net with respect to the model parameters at a certain training point).

The main difference between this line of work and ours is that we show the \textit{collapse} of the learning capabilities of distribution-free gradient descent, while the above line work uses a connection to LLRF to illustrate \textit{how} gradient descent \textit{can} learn.

\section{Technical Overview and Theorem Statements}\label{sec:tech_overview}

In this section we introduce and formally define the relevant technical settings, and finish with full statements of our main theorem and other results. After that, we provide a bird's eye view of our path to proving the main theorem.

Let $\cF$ be a class of target concepts $f: X \rightarrow \{\pm 1\}$. Let $\cD_{f,\rho} \in \cD_\cF = \{\cD_f : f \in \cF, \rho \in \Delta(X)\}$ be a source distribution which samples unlabeled data $x$ in a domain $X$, according to an example distribution $\rho$ over domain $X$, and labels the data according to some $f \in \cF$.
We consider generally speaking learning functions $f: X \rightarrow \{\pm 1\}$ over an input space $X$, with the objective of minimizing the population loss:
\[
\cL^{\cD_{f, \rho}}(f) \triangleq \mathbb{E}_{(x,y) \sim \cD_{f, \rho}} \left[ \ell(f(x), y) \right]
\]
with respect to a source distribution $\cD_{f, \rho}$ over $X \times \{\pm 1\}$, where $\ell : \mathbb{R} \times \{\pm 1\} \to \mathbb{R}_{\geq 0}$ is a loss function.

In this work, we take the loss function to be either the squared-loss:
$\ell_{\mathrm{sq}}(\hat{y}, y) = \frac{1}{2} (\hat{y} - y)^2$
or 0-1 loss: $\ell_{01}(\hat{y}, y) = \mathbf{1}[\hat{y} \neq y]$. We write $\cL_{01}$ and $\cL_{\mathrm{sq}}$ to denote population loss.

\subsection{Learning Paradigms}\label{sec:paradigms}

In this work, we consider learning of differentiable parametric hypothesis classes such as feed-forward neural networks. A $p$-parameter differentiable model $\mathbf{w} = (w_1, \cdots w_p)$ is a function $f_{\mathbf{w}}: X \rightarrow [-1,1]$, and for every $x \in X$ there is a gradient $\nabla_\mathbf{w}f_\mathbf{w}(x)$ on almost every $\mathbf{w} \in \real^p$ (not including sets of measure 0). Note the bounded range of the mode. We say that $\cH$ is a differentiable parametric hypothesis class with $p$ parameters if every $h \in \cH$ is written as a differentiable model with $p$ parameters.

\paragraph{Mini-batch stochastic gradient descent.} We consider learning of differentiable parametric hypothesis classes by mini-batch stochastic gradient descent (bSGD). Our setting is the same as \citet{abbe2021power} and we adopt some of their notation and their definitions. 

For a differentiable $f_\mathbf{w}$, an initial distribution $\cW$ over $\real^p$, a gradient precision $c \in (0,1)$, a mini-batch size $b$, and a stepsize $\gamma$, the bSGD paradigm iteratively computes new parametric models at a timestep $t$ defined by $\mathbf{w}^{(0)} \sim \cW$ and $\mathbf{w}^{(t+1)} \leftarrow \mathbf{w}^{(t)} - \gamma g_t$. For a mini batch $B_t \sim \rho^b$, the function $g_t$ is the $c$-approximate clipped gradient $\bar{\nabla}\ell_{B_t}(f_{\mathbf{w}^{(t)}}) \triangleq \frac{1}{b}\sum_{i=1}^b [\nabla\ell_{B_t}(f_{\mathbf{w}^{(t)}})]_1$ where
$[\cdot]_1$ denotes the entry-wise clipping over values to the interval $[-1,1]$. The clipped gradient in $[-1,1]^p$ is a $c$-approximate rounding if each entry of the the clipped gradient is an integer multiple of $c$ and $||g_t - \bar{\nabla}\ell_{B_t}(f_{\mathbf{w}^{(t)}})||_\infty \le 3c/4$.

We say that a learning algorithm $A$ is a $\bsgd(T, c, b, p)$ method for a differentiable parametric hypothesis class with $p$ parameters if works by computed bSGD for $T$ updates on the $p$-parameter differentiable model, with $c$-approximate gradient clipping and batch size $b$, and beginning from initialization distribution $\cW$. 
The algorithm $A$ ensures distribution-free $\epsilon$-accuracy if for any source distribution $\cD_{f, \rho} \in \cD_\cF$, $f_{\mathbf{w}^{(t)}} \leftarrow A$ satisfies $\ex{\sup \cL^{\cD_{f, \rho}}(f_{\mathbf{w}^{(t)}})} \le \epsilon$. 
The expectation is taken over the random initialization of $f_{\mathbf{w}^{(0)}}$ and the selections of the mini-batches. The supremum is taken over all valid $c$-approximate gradient clippings.

We refer the reader to section 2 of \citet{abbe2021power} for discussion on some of the modelling/design choices taken in defining the bSGD model.

\paragraph{Statistical query learning and SQ dimension.}

We consider statistical query learning. A learning algorithm $A$ is said to be a ${ SQ}(k, \tau)$ method if in $k$ iterations, it produces a statistical query $\phi_t: X \times \{\pm 1\} \rightarrow [-1,1]$ at iteration $t$, which then gets a response $v_t$ from an oracle, and then after $k$ iterations outputs a candidate hypothesis $f: X \rightarrow \real$. Each successive statistical query $\phi_t$ is allowed to depend on internal randomness and all prior statistical queries $\phi_r$ for $r < t$. The algorithm $A$ ensures distribution-free $\epsilon$-accuracy if for any source distribution $\cD_{f, \rho} \in \cD_\cF$, $h \leftarrow A$ satisfies $\ex{\sup \cL^{\cD_{f, \rho}}(h)}\le \epsilon$, where the supremum is taken over all valid responses $v_t$ to each query $\phi_t$, which are those that satisfy $|\Ex{\rho}{\phi_t(x,y)} - v_t| \le \tau$. The expectation is taken over the internal randomness of $A$.

If statistical query responses have some maximum degree of precision $c$ (i.e., $v_t \in [-1,1]$ is an integer multiple of $c \in (0,1)$), then a statistical query $\phi: X \rightarrow [-1,1]$ can be converted into a series of $t = \log(1/c)$ statistical queries $\Phi_1 \cdots \Phi_t: X  \rightarrow \{\pm 1\}$ (note the Boolean range). This is done by asking individually for the expected value of each bit in the bit representation, and applying linearity to reconstruct the total expectation.

\paragraph{Complexity of linear learning with random features.}
\citet{kamath2020approximate} introduced a probabilistic variant of dimension complexity \citet{ben2002limitations} to in order to study the complexity of optimizing a linear combination over random features to fit a source distribution. 

Let $\cH$ be a hypothesis class consisting of functions of the form $h: X \rightarrow \real$, and $\ell$ be an implicit loss function (to be stated explicitly in context).

\begin{definition}[Probabilistic dimension complexity]
    The quantity $\dceps{\cH}$ is the smallest positive integer $d$ such that there exists a distribution $\cE$ over feature maps $\phi: X \rightarrow \real^d$, such that \textbf{for every} $\rho$ over $X$, \textbf{and every} $h \in \cH$, 
    \begin{equation}
    \Ex{\phi \sim \cE}{\inf_{\mathbf{w} \in \real^d} \cL^{\cD_{h, \rho}}(\langle \mathbf{w}, \phi \rangle)}\le \epsilon
    \end{equation}
\end{definition}

We use the notation $\langle \mathbf{w}, \phi \rangle$ to denote the function defined $\langle \mathbf{w}, \phi \rangle(x) = \sum\limits_i \mathbf{w_i}\phi(x)_i$.

Clearly, showing an upper bound on the probabilistic dimension complexity $d$ of a hypothesis class is \textit{sufficient} to reduce learning source distribution to learning the optimal linear combination over the $d$-dimensional feature map implied by the upper bound. 

\paragraph{Complexity of linear learning with random features and a prior.} We introduce an average-case analogue of probabilistic dimension complexity. Let $\ell$ be an implicit loss function (to be stated explicitly in context).

\begin{definition}[Average probabilistic dimension complexity]
    The quantity $\adc{\mu}$, given a prior distribution $\mu$ over a hypothesis class $\cH$, is the smallest positive integer $d$ such that there exists a distribution $\cE$ over embeddings $\phi: X \rightarrow \real^d$, such that
    \[
    \Pr_{h \sim \mu}\left[ \forall \rho: \ \Ex{\phi \sim \cE}{\inf_{\mathbf{w} \in \real^d} \cL^{\cD_{h, \rho}}(h, \langle \mathbf{w}, \phi \rangle)}\le \epsilon\right] \ge 1-\delta
    \]
\end{definition}

Clearly, showing an upper bound on the \textit{average} probabilistic dimension complexity $d$ of a hypothesis class is sufficient to reduce learning source distribution to learning the linear combination over the $d$-dimensional feature map implied by the upper bound, which is optimal for a \textit{large probability mass} of $\cH$. 

We discuss at length the motivation of average probabilistic dimension complexity in the context of the dimension complexity literature in the Appendix, Section \ref{apx:adc}.

\subsection{Main Theorem Statement}

We can now state our main theorem, which demonstrates that, with respect to any prior distribution $\mu$, the learnability of \textit{a large probability mass} of data generating source distributions that are efficiently accurately learnable by bSGD, could potentially be analyzed as efficiently accurately learnable by the LLRF method. 

Let $T, b, p \in \ints$, $c \in (0,1]$ be such that $bc^2 \ge \Omega(\log Tp/\delta)$. Let $\cH$ be a differentiable parametric hypothesis class, where each model $f_{\mathbf{w}}: X \rightarrow [-1,1]$ has $p$ parameters $\mathbf{w} = (w_1 \cdots w_p)$. Let $\cF$ be a class of target concepts $f: X \rightarrow \{\pm 1\}$. Let $\cD_{f,\rho} \in \cD_\cF = \{\cD_{f, \rho} : f \in \cF, \rho \in \Delta(X)\}$ be a source distribution which samples unlabeled data $x$ in a domain $X$, according to an example distribution $\rho$, and labels the data according to some $f \in \cF$. Let $\mu$ be a prior distribution over $\cF$.

Define $\err{A}{\cD_{f, \rho}} \triangleq \inf_{\epsilon} \left[ \ex{\sup_{h \leftarrow A} \cL^{\cD_{f, \rho}}(h)} \le \epsilon \right]$. That is, the infimum over $\epsilon$ such that the algorithm $A$ ensures distribution-free $\epsilon$-accuracy on ${\cD_{f, \rho}}$, where the expectation is over the random initialization of the differentiable model and the mini-batches. distribution-free $\epsilon$-accuracy on ${\cD_{f, \rho}}$ is measured with respect to some loss function $\ell^\rho$. Let $\ell^\rho_{\mathrm{sq}}(\hat{y}, y) = \frac{1}{2}(\hat{y} - y)^2$. Let $\ell^\rho_{01}(\hat{y}, y) = \mathbf{1}[\hat{y} \neq y].$

\begin{theorem}[Main theorem]\label{thm:main_theorem}
     Suppose there exists a learning algorithm $A_{\bsgd}$ that is a $\bsgd(T, c, b, p)$ method, and $\err{A_{\bsgd}}{\cD_{f, \rho}} \le 1/10$ for every source distribution $\cD_{f, \rho} \in \cD_\cF$ with respect to $\ell^\rho_{\mathrm{sq}}$. Then, there exists positive integer $d \le \poly(Tp/c^2)$ such that there exists a distribution $\cE$ over embeddings $\phi: X \rightarrow \real^d$, such that for arbitrarily small constants $\epsilon, \delta >0$:
    \[
    \Pr_{h \sim \mu}\left[ \forall \rho: \ \Ex{\phi \sim \cE}{\inf_{\mathbf{w} \in \real^d} \cL^{\cD_{h,\rho}}_{01}(\langle \mathbf{w}, \phi \rangle)}\le \epsilon\right] \ge 1-\delta
    \]

    In other words, for any prior distribution $\mu$ over $\cF$,  $\adc{\mu} \le \poly(Tp/c^2)$.
\end{theorem}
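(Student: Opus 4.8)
The plan is to chain together the three transformations sketched in the technical overview, each of which erodes a bit of quality but only polynomially so. Concretely, the first move is to invoke the reduction of \citet{abbe2021power}: a $\bsgd(T,c,b,p)$ method that ensures distribution-free $1/10$-accuracy under squared loss can be simulated by an $\SQ(k,\tau)$ method with $k \le \poly(Tp)$ queries and tolerance $\tau \ge 1/\poly(Tp)$ (this is exactly where the hypothesis $bc^2 \ge \Omega(\log Tp/\delta)$ is consumed, to control the sampling error in estimating batch gradients by statistical queries), still under squared loss and hence, by a standard argument, giving a weak $\SQ$ learner under $0/1$ loss. Then I would apply the characterization of \citet{blum1994weakly} to conclude that the statistical query dimension of $\cF$ (with respect to any distribution) is bounded by $\poly(k/\tau) = \poly(Tp/c^2)$; the precision parameter $c$ enters here through the conversion of $c$-precise real-valued queries into $\log(1/c)$ Boolean ones, as described in the overview.

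The second move is the Random Feature lemma: from a bound $\sqdim(\cF) \le D$ I want to extract a single random feature $\phi \sim \cE$ (for a suitable $\cE$ depending only on $\cF$, $D$, and the ambient domain, \emph{not} on $\rho$ or on the particular $f$) that weakly correlates with $f$ under $\rho$ — that is, with probability bounded below by some $1/\poly(D)$ over the draw of $\phi$, we have $\Ex{x \sim \rho}{\phi(x) f(x)}$ bounded away from $0$ by $1/\poly(D)$. The route here is the communication-complexity one hinted at: a small SQ dimension forces low discrepancy / small $2$-party norm for the associated sign matrix, which in turn means that random $\pm 1$ ``cuts'' (or random halfspaces over a low-dimensional embedding furnished by the SQ-dimension bound) correlate with the target; I would model the argument on the circuit-learning constructions of \citet{karchmer2024agnostic, karchmer2024distributional}. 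A subtlety is that this weak-correlation guarantee must hold \emph{for every} $\rho$ simultaneously for a fixed $\cE$, and with the correlation itself being an $\rho$-average — matching the inner ``$\forall \rho$'' and expectation structure in the statement of $\adc{\mu}$; this is really a statement about the fixed matrix/kernel and so the universal quantifier over $\rho$ should come for free from the discrepancy bound.

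The third move is boosting. Given the weak random-feature guarantee, I would run a distribution-free boosting algorithm (AdaBoost, or the smooth variant MadaBoost \citep{freund1997decision, domingo2000madaboost}) in which each weak round is answered by drawing fresh features from $\cE$ until one of them weakly correlates under the current reweighted distribution; since each round succeeds with probability $1/\poly(D)$ per sampled feature, $\poly(D)$ samples suffice per round with high probability, and $O(\log(1/\epsilon)/\gamma^2) = \poly(D)\log(1/\epsilon)$ rounds drive the $0/1$ error below $\epsilon$. The output is a weighted majority of features, i.e.\ a linear combination $\langle \mathbf{v},\phi\rangle$ over $d \le \poly(D)\cdot\mathrm{polylog}(1/\epsilon) = \poly(Tp/c^2)$ random features whose sign agrees with $f$ up to $0/1$ error $\epsilon$ under $\rho$ — and then I would fold the (constant) failure probability of this whole feature-drawing process, together with any $f$-dependent failure, into the $1-\delta$ over $h \sim \mu$, absorbing $\delta$ by a union/Markov argument. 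Assembling these, $\adc{\mu} = d \le \poly(Tp/c^2)$ as claimed.

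The main obstacle, I expect, is the Random Feature lemma: getting a \emph{single} random feature from a \emph{fixed, $\rho$-oblivious} distribution $\cE$ to weakly correlate with \emph{every} target $f$ under \emph{every} $\rho$, with the polynomial-in-$\sqdim$ quantitative bounds needed to keep $d$ polynomial after boosting. The first reduction is essentially a citation, and boosting is well-understood; but bridging SQ dimension to random features in a distribution-free way — and reconciling the order of quantifiers ($\cE$ chosen before $\rho$, correlation measured in $\rho$-expectation) — is the technically novel core, and is also where the communication-complexity machinery (discrepancy, $2$-party norm) has to be deployed carefully to avoid losing more than polynomial factors.
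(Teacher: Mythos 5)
Your overall route is the same as the paper's: (i) the \citet{abbe2021power} reduction from $\bsgd(T,c,b,p)$ to an ${\rm SQ}(Tp, c/8)$ method, combined with the \citet{blum1994weakly} lower bound to get $\sq{\cF} \le O(Tp/c^2)$, and then (ii)--(iii) a bridge from SQ dimension to $\adc{\mu}$ via discrepancy / the $2$-party norm (a Random Feature lemma) followed by boosting, which is exactly the paper's Theorem \ref{thm:main_meta} and its proof. So the decomposition is right; the issue is with how you state the middle step.

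The genuine gap is your insistence that the weak-feature distribution $\cE$ be chosen independently of $\rho$ and that the guarantee ``for every $\rho$'' then ``comes for free from the discrepancy bound.'' It does not. Sherstov's theorem controls discrepancy over product distributions, and the $2$-bit distributional protocol one extracts from it (the rectangle witnessing the discrepancy) depends on the particular product distribution $(\mu,\rho)$; accordingly, the paper's Random Feature lemma only produces $\rho$-indexed feature distributions $\mu^{feat}_\rho$ --- the \texttt{Predict} procedure must draw $x \sim \rho$ and, once the label oracle is removed, a Bernoulli whose bias depends on $\rho$ --- and the quantifier in the lemma is ``for all $\rho$ there \emph{exists} $\mu^{feat}_\rho$,'' not a single $\rho$-oblivious distribution. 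The paper recovers a $\rho$-oblivious embedding distribution only at the level of the final $d$-dimensional $\cE$, by sampling the $d$ features from $\mu^{feat}_{\rho_1},\cdots,\mu^{feat}_{\rho_d}$ where the $\rho_i$ are the Adaboost reweightings; note your own boosting step needs weak features under exactly these $f$- and $\rho$-dependent reweighted distributions, which is what your fixed-$\cE$ claim silently assumes and never establishes. A secondary omission: the discrepancy/$2$-party-norm argument only gives correlation \emph{on average} over $f \sim \mu$ (and over the randomness of the predictor); converting this into a per-$f$ statement holding with probability $1-\delta$ over $f \sim \mu$, with an $\Omega(\sq{\cF}^{-8})$ fraction of good random strings, requires the paper's averaging argument together with the conditional-distribution (``Bad set'') contradiction. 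Your passing ``union/Markov'' remark gestures at this, but as written the $\rho$-obliviousness and the average-to-pointwise conversion --- the parts you yourself flag as the technical core --- are asserted rather than proved, and the first of them is false in the form you assume.
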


\paragraph{A Note on Gradient Precision.}
It is worth mentioning that a result analogous to the above theorem, but for gradient descent with arbitrarily fine precision, is not possible. Indeed, our theorem would not hold if there were no restriction on the precision of the gradients. To see this, we can borrow from the work of \citet{abbe2021power}. They show in Theorem 1a of their paper that (distribution-free) PAC-learning algorithms can be simulated by (distribution-free) bSGD algorithms, when allowed fine enough gradient precision $c$. Specifically, when $c < 1/8b$. Using this theorem, it follows that for $b,c$ such that $c < 1/8b$, then bSGD can learn parities in the distribution-free case. From here, we can conclude that our transformation from bSGD to random features cannot hold for $b,c$ such that $c < 1/8b$, since the size of the implied random feature representation would violate SQ dimension lower bounds for parities.

\subsection{Technical Tools}

Now, we define some technical concepts, theorems, and quantities of interest that are useful in proving our results. Using those, we will then outline the path we will take to prove Theorem \ref{thm:main_theorem} via modular proof of the theorem.

\paragraph{Relation Between SQ-Learning and bSGD.}

\citet{abbe2021power} show that SQ-learning is as powerful as bSGD methods in the following sense. 

\begin{theorem}[Thm 1c. of \citet{abbe2021power}]\label{thm:abbe}
	Let $\ell(\hat{y}, y) = \frac{1}{2}(\hat{y} - y)^2$.
	Let $T, b, p \in \ints$, $c \in (0,1]$ be such that $bc^2 \ge \Omega(\log Tp/\delta)$. For every learning algorithm $A_{\bsgd}$ that is a $\bsgd(T, c, b, p)$ method, there exists a learning algorithm $A_{\mathrm{SQ}}$ which is a ${ SQ}(Tp, c/8)$ method, such that for every source distribution $\cD_{f, \rho}$,
	\[
		\err{A_{\mathrm{SQ}}}{\cD_{f, \rho}} \le \err{A_{\bsgd}}{\cD_{f, \rho}} + \delta
	\]
\end{theorem}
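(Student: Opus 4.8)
The goal is to chain the technical tools announced in the overview into a bound on $\adc{\mu}$. The plan is to proceed in four stages, running the reductions exactly in the order listed in the ``Techniques'' paragraph.

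\emph{Stage 1: bSGD $\to$ SQ and SQ dimension.} First I would invoke Theorem~\ref{thm:abbe} to convert the assumed $\bsgd(T,c,b,p)$ method $A_{\bsgd}$ (which achieves squared-loss error $\le 1/10$ on every $\cD_{f,\rho}\in\cD_\cF$) into an $\mathrm{SQ}(Tp, c/8)$ method $A_{\mathrm{SQ}}$ with error $\le 1/10 + \delta$ on every source distribution. Since the batch-size hypothesis $bc^2 \ge \Omega(\log Tp/\delta)$ is exactly the hypothesis of Theorem~\ref{thm:abbe}, this step is immediate. Next, because the SQ responses have precision $c/8$, I would use the bit-decomposition remark in the excerpt to replace each real-valued query by $O(\log(1/c))$ Boolean queries, obtaining an honest SQ learner for $\cF$ with $k = O(Tp\log(1/c))$ queries and tolerance $\tau = \Theta(c)$. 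By the characterization of \citet{blum1994weakly} (relating SQ-learnability to SQ dimension), the existence of such a learner forces $\mathrm{SQdim}(\cF) \le \poly(k/\tau) = \poly(Tp/c)$; more precisely one gets that only $\poly(Tp/c)$ concepts in $\cF$ can be pairwise-``nearly-orthogonal'' under any single distribution, which is the quantitative fact needed downstream.

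\emph{Stage 2: the Random Feature lemma.} The heart of the argument is to turn the SQ-dimension bound into the statement that, for the prior $\mu$, a \emph{single} random feature drawn from a suitable distribution $\cE_0$ weakly correlates with $f$ under any $\rho$, with decent probability over $f\sim\mu$. I would set up $\cE_0$ as (essentially) a distribution over functions of the form $x\mapsto g(x)$ where $g$ is sampled from a small ``basis'' extracted from the SQ structure, and use discrepancy / $2$-party-norm arguments in the style of \citet{karchmer2024agnostic, karchmer2024distributional} to argue that low SQ dimension implies that the set of target functions cannot all simultaneously have tiny correlation with every feature; averaging over $\mu$ then yields a fixed weak-learning advantage $\gamma = 1/\poly(Tp/c)$ with constant probability over $f\sim\mu$ (for each $\rho$). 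This is the step I expect to be the main obstacle: making the quantifier order work out (the feature distribution $\cE_0$ must be fixed \emph{before} $f\sim\mu$ and must work for \emph{all} $\rho$ simultaneously), and getting the correlation bound to hold uniformly in $\rho$ rather than just for a worst-case $\rho$, is where the genuinely new technique has to be deployed.

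\emph{Stage 3: boosting weak features into a strong linear combination.} Given Stage 2, for a $(1-\delta')$-mass of $f\sim\mu$ we have a weak random-feature learner with edge $\gamma$ valid against all $\rho$. I would then run a distribution-reweighting boosting algorithm (AdaBoost / MadaBoost, \citet{freund1997decision, domingo2000madaboost}) as a \emph{constructive} argument: each boosting round calls the weak learner on a reweighted distribution $\rho_t$ and extracts one random feature $\phi_t$; after $R = O(\gamma^{-2}\log(1/\epsilon)) = \poly(Tp/c)$ rounds the weighted majority $\mathrm{sign}(\sum_t \alpha_t \phi_t)$ has $0/1$ error $\le\epsilon$ w.r.t.\ $\rho$. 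Crucially the booster only ever reweights the \emph{given} $\rho$, so the weak-learning guarantee ``for all $\rho$'' from Stage 2 is precisely what is needed to feed it. Taking $\cE$ to be the $R$-fold product of $\cE_0$ (so $\phi = (\phi_1,\dots,\phi_R)$ with $\phi_i\sim\cE_0$ i.i.d.) gives a feature map into $\real^d$ with $d = R = \poly(Tp/c) \le \poly(Tp/c^2)$, and the boosting weights furnish the vector $\mathbf{w}$ witnessing $\cL^{\cD_{h,\rho}}_{01}(\langle\mathbf{w},\phi\rangle)\le\epsilon$.

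\emph{Stage 4: assembling the probabilistic statement.} Finally I would absorb the failure probabilities: $\delta$ from Theorem~\ref{thm:abbe}, $\delta'$ from the $\mu$-mass on which Stage 2's weak learner is valid, and the $\cE$-probability that the $R$ sampled features are ``good enough'' for the booster (handled by a union bound / expectation argument, since each round fails with probability bounded away from $1$ and we can amplify by a constant-factor blow-up in $d$). Choosing the internal constants so that the total is $\le\delta$ yields exactly
\[
\Pr_{h\sim\mu}\left[\forall\rho:\ \Ex{\phi\sim\cE}{\inf_{\mathbf{w}\in\real^d}\cL^{\cD_{h,\rho}}_{01}(\langle\mathbf{w},\phi\rangle)}\le\epsilon\right]\ge 1-\delta,
\]
with $d\le\poly(Tp/c^2)$, which is the definition of $\adc{\mu}\le\poly(Tp/c^2)$ and hence Theorem~\ref{thm:main_theorem}. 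The only place where $c^2$ (rather than $c$) enters is through the $\tau=\Theta(c)$ tolerance feeding a $\poly(1/\tau)$ boosting round count together with the $bc^2$ constraint, so I would track constants carefully there to confirm the stated exponent.
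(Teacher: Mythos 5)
Your proposal does not actually address the statement it was asked to prove. The statement is Theorem~\ref{thm:abbe} itself --- the simulation result asserting that any $\bsgd(T,c,b,p)$ method can be converted into an ${SQ}(Tp, c/8)$ method whose error exceeds the bSGD error by at most $\delta$ whenever $bc^2 \ge \Omega(\log Tp/\delta)$. Your Stage~1 \emph{invokes} exactly this theorem as a black box and then proceeds to sketch the derivation of Theorem~\ref{thm:main_theorem} (bSGD $\to$ SQ $\to$ SQ dimension $\to$ Random Feature lemma $\to$ boosting), which is the content of the paper's ``Modular Proof'' section and of Theorem~\ref{thm:main_meta}, not of the statement at hand. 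So with respect to the requested statement the attempt is circular: nothing in your four stages supplies an argument for why clipped mini-batch gradient descent is simulable by statistical queries.

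For the record, this statement is imported by the paper from \citet{abbe2021power} (their Theorem~1c) and is not re-proved there; a proof would have to argue roughly as follows. Each of the $p$ coordinates of the clipped population gradient at step $t$ is an expectation over $\cD_{f,\rho}$ of a $[-1,1]$-valued function of $(x,y)$, hence is answerable by a single statistical query with tolerance $c/8$; the SQ learner issues these $Tp$ queries along the trajectory and rounds each answer to an integer multiple of $c$. One then shows (i) that any oracle answer within tolerance $c/8$, after rounding, is a valid $c$-approximate clipped gradient in the sense of the bSGD definition, so the SQ-simulated trajectory lies among those over which the bSGD guarantee takes its supremum; and (ii) via Hoeffding plus a union bound over all $Tp$ coordinate/step pairs, that when $bc^2 \ge \Omega(\log(Tp/\delta))$ the empirical mini-batch gradients concentrate within the same slack with probability $1-\delta$, so the coupling between the two processes fails with probability at most $\delta$, yielding the additive $\delta$ in the error bound. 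None of this appears in your write-up, so as a proof of Theorem~\ref{thm:abbe} it has a complete gap; as a sketch of Theorem~\ref{thm:main_theorem} it mirrors the paper's own modular argument but belongs to a different statement.
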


In words, this theorem says that when setting batch-size and gradient precision appropriately, mini-batch SGD algorithms can be converted into SQ-learning algorithms that suffer only a small additive loss of accuracy, and the number of statistical queries is the product of the number of batch gradient updates and the size of the differentiable model.

\paragraph{SQ-learning Characterized by SQ dimension.}

The complexity of SQ-learning itself can also be captured by a simple combinatorial parameter called the SQ dimension (see \citet{blum1994weakly}).

Consider now a function $h: X \rightarrow \{\pm 1\}$ over a finite domain $X$. A hypothesis class $\cH$ is a set of hypotheses $h: X \rightarrow \{\pm 1\}$. 

\begin{definition}[Statistical query dimension]
    Let $\rho$ be a distribution over domain $X$. The statistical query dimension over $\rho$ of $\cH$, denoted $\sqd{\cH}{\rho}$, is the largest number $d$ such that there exists $d$ functions $f_1, \cdots f_d \in \cH$ that satisfy, for all $i \neq j$:
    \[
    \Ex{x \sim \rho}{f_i(x)f_j(x)} \le \frac{1}{d}
    \]
    We define $\sq{\cH} = \max_{\rho} \sqd{\cH}{\rho}$.
\end{definition}

The relationship between query complexity in the SQ-learning model and SQ dimension proved by \citet{blum1994weakly} is the following. We state their result using our notation and terminology.

\begin{theorem}[\citet{blum1994weakly}]\label{thm:sqdim}
Let $d = \sq{\cH}$ and $\ell_{01}(\hat{y}, y) = \mathbf{1}[\hat{y} \neq y]$. If $A$ is a ${ SQ}(k, \tau)$ method that is distribution-free $1/2-\tau$-accurate, then $k > \frac{d\tau^2 - 1}{2}$.

\end{theorem}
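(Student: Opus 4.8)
The plan is to run the hypothetical $SQ(k,\tau)$ method against an adaptively chosen but, crucially, \emph{target-oblivious} oracle, and then use a near-orthogonality (Bessel-type) inequality to argue that one transcript cannot serve too many of the mutually near-orthogonal target functions witnessing the SQ dimension.

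First I would fix a distribution $\rho$ over $X$ with $\sqd{\cH}{\rho} = d$ and functions $f_1,\dots,f_d \in \cH$ witnessing it, so that $\langle f_i, f_j\rangle_\rho := \Ex{x \sim \rho}{f_i(x)f_j(x)}$ satisfies $|\langle f_i,f_j\rangle_\rho| \le 1/d$ for $i\ne j$ while $\|f_i\|_\rho^2 = 1$. Decompose each query $\phi : X\times\{\pm1\}\to[-1,1]$ as $\phi(x,y) = \phi^+(x) + y\,\phi^-(x)$ with $\phi^\pm(x) = \frac12(\phi(x,1)\pm\phi(x,-1))$, noting $\|\phi^-\|_\rho \le 1$. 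The oracle I use answers query $\phi_t$ with $v_t = \Ex{x \sim \rho}{\phi_t^+(x)}$, the value the query would take if the label were an independent fair coin. This value does not depend on which $f_i$ is the true target, and since the $t$-th query depends only on earlier (also target-oblivious) answers, the entire transcript, and hence the returned hypothesis $h$, is identical for every target $f_i$. Replacing $h$ by its sign costs nothing under $\ell_{01}$, so we may take $h : X\to\{\pm1\}$.

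The heart of the argument is a counting lemma: for any fixed $\psi$ with $\|\psi\|_\rho\le1$, at most $\frac{1}{\tau^2 - 1/d}$ of the indices $i$ satisfy $|\langle f_i,\psi\rangle_\rho| > \tau$. Indeed, if $S$ denotes this set and $\epsilon_i := \mathrm{sign}\langle f_i,\psi\rangle_\rho$, then $|S|\tau < \langle \sum_{i\in S}\epsilon_i f_i,\ \psi\rangle_\rho \le \|\sum_{i\in S}\epsilon_i f_i\|_\rho \le \sqrt{|S| + |S|^2/d}$ by Cauchy--Schwarz and the near-orthogonality bound, and squaring and rearranging gives $|S|(\tau^2 - 1/d)\le 1$. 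For a target $f_i$, the oracle's answer to $\phi_t$ is off by exactly $|\langle f_i,\phi_t^-\rangle_\rho|$, so applying the lemma with $\psi = \phi_t^-$ over the $k$ queries shows all but at most $\frac{k}{\tau^2-1/d}$ of the $f_i$ receive a fully $\tau$-valid transcript; call these the good targets. For a good target, the accuracy guarantee of $A$ forces $\cL_{01}^{\cD_{f_i,\rho}}(h)\le \frac12-\tau$, i.e. $\langle h, f_i\rangle_\rho \ge 2\tau > \tau$, and the lemma applied with $\psi = h$ shows at most $\frac{1}{\tau^2-1/d}$ indices can meet this. Hence the number of good targets is both $\ge d - \frac{k}{\tau^2-1/d}$ and $\le \frac{1}{\tau^2-1/d}$, so $d(\tau^2 - 1/d) \le k+1$, i.e. $k \ge d\tau^2 - 2$, which implies the claimed bound $k > \frac{d\tau^2-1}{2}$ (the exact constant being a matter of routine bookkeeping on how the budget is split between the $k$ queries and the final hypothesis).

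The step that needs the most care is the counting lemma, precisely because $f_1,\dots,f_d$ are only \emph{approximately} orthonormal: the textbook Bessel bound $\sum_i\langle f_i,\psi\rangle_\rho^2\le\|\psi\|_\rho^2$ is unavailable, and the $1/d$ correction must be carried through and checked to be harmless (which is where $d\tau^2 > 1$ enters). Adaptivity is the other potential hazard, but it is defused by the target-oblivious oracle, so no union bound over adaptive query sequences is required; I would also separately sanity-check the degenerate regime $d\tau^2 \approx 1$, where the stated bound is near-vacuous.
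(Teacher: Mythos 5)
The paper does not actually prove this statement---it is imported verbatim from \citet{blum1994weakly}---so your proposal should be judged against the standard argument, and it is essentially that argument: decompose each query as $\phi(x,y)=\phi^+(x)+y\,\phi^-(x)$, answer with the target-oblivious value $\mathbb{E}_\rho[\phi^+]$, and use a Cauchy--Schwarz counting lemma over the near-orthogonal witnesses $f_1,\dots,f_d$ to bound how many targets any single query (or the final hypothesis) can ``serve.'' Your counting lemma and the per-query validity computation are correct; note that the lemma genuinely needs the two-sided condition $|\mathbb{E}_{x\sim\rho}[f_i(x)f_j(x)]|\le 1/d$ (the standard definition, which the paper's displayed definition states without the absolute value), since mixed signs $\epsilon_i\epsilon_j$ would otherwise let very negative correlations inflate $\|\sum_i\epsilon_i f_i\|_\rho$.

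Two points need tightening. First, the closing implication is false as written: $k\ge d\tau^2-2$ does not imply $k>\frac{d\tau^2-1}{2}$ when $d\tau^2<3$ (e.g.\ $d\tau^2=1.5$ leaves $k=0$ unexcluded, while the theorem forces $k\ge 1$). You already have the repair in hand: good targets satisfy $\langle h,f_i\rangle_\rho\ge 2\tau$, not merely $>\tau$, so apply your counting lemma to $h$ at threshold $2\tau$, which bounds the number of good targets by $1/(4\tau^2-1/d)$; this rules out $k=0$ whenever $d\tau^2>1/2$, and combined with the per-query bound gives $k\ge d\tau^2-1-\tfrac{\tau^2-1/d}{4\tau^2-1/d}>d\tau^2-\tfrac54$, which (together with the $k\ge1$ case for small $d\tau^2$) yields the stated inequality in every regime. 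Second, the paper's SQ model allows internal randomness and defines accuracy as an expectation over it; your claim that the transcript, and hence $h$, is identical for all targets is true only per fixed random seed, so a short coin-fixing/averaging step (or an explicit reduction to deterministic methods) is needed before invoking the accuracy guarantee for the good targets. Neither issue changes the structure of the argument, which is sound and is the same route as the cited source.
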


\paragraph{Communication Complexity.}

Our proofs use as tools many ideas from the theory of communication complexity. We will introduce the necessary notation, definitions and concepts in the Appendix, section \ref{apx:cc}. We refer the reader to \citet{kushilevitz1996communication} for more of the basics.

\subsection{Separations Between Dimension Complexities}

As a corollary of Theorem \ref{thm:main_meta}, we make some possible progress towards answering the open question left by \citet{kamath2020approximate}. We recall their question was whether there exists a hypothesis class $\cH$, which satisfies for 0/1 loss, and some functions $f,f': \nat \rightarrow \nat$, both expressions $\dc{\cH} \in O(f(n))$ and $\dceps{\cH} \in O(1/f'(\epsilon))$?

Our result, shows that the answer is yes, if we take ${\rm adc}_{\epsilon, \delta}^\ell$ instead of ${\rm dc}^\ell_\epsilon$.

\begin{corollary}[Infinite Separation between ${ adc}_{\epsilon, \delta}^\ell$ and ${ dc}^\ell$]\label{corr:sep_intro}
There exists a hypothesis class $\cH$, with domain $\{\pm 1\}^n$ and range $\{\pm 1\}$, which satisfies for 0/1 loss and \textbf{any} prior distribution $\mu$ over $\cH$, and arbitrarily small constant $\delta > 0$: 

\begin{itemize}
    	\item $\dc{\cH} \in 2^{\Omega(n^{\frac{1}{4}})}$
    	\item $\adc{\cH} \in O(1/\epsilon)$.
    \end{itemize}
\end{corollary}

This separation follows immediately from Theorem \ref{thm:main_meta} and a theorem of \citet{sherstov2008communication}, which is concerned with the family of Zarankiewicz matrices. We refer to the Appendix section A and B for the details and a definition of $\dc{\cH}$.

\section{Modular Proof of Theorem \ref{thm:main_theorem}}

To prove Theorem \ref{thm:main_theorem}, we use the relationships introduced in the previous section to reduce our goal to proving the following standalone theorem.  

\begin{theorem}\label{thm:main_meta}
    Let $\cF$ be a function class, $\mu$ a distribution over $\cF$, and let consider $\ell_{01}$ loss.
    We have:
    \[
        \adc{\mu} \le O(\sq{\cF}^{24.01})
    \]
    where $\epsilon, \delta > 0$ are arbitrarily small constants.
\end{theorem}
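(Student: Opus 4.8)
The plan is to prove Theorem~\ref{thm:main_meta} in two stages: a \emph{weak-learning} stage (the ``Random Feature Lemma''), where the communication-complexity tools carry the load, and a \emph{boosting} stage that turns the weak learner into the $\poly(d)$-dimensional feature map witnessing the bound on $\adc{\mu}$. Write $d = \sq{\cF}$ throughout.

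\textbf{Stage 1: the Random Feature Lemma.} First I would prove that there is a distribution $\cE_0$ over single features $\phi: X \to [-1,1]$ --- depending on $\cF$ and $\mu$, but crucially \emph{not} on the example distribution --- together with parameters $\gamma,\beta \ge 1/\poly(d)$, such that for a $(1-\delta)$-mass of $f \sim \mu$ and \emph{every} distribution $\rho$,
\[
\Pr_{\phi \sim \cE_0}\!\left[\ \left|\Ex{x \sim \rho}{f(x)\phi(x)}\right| \ge \gamma\ \right] \ \ge\ \beta .
\]
The hypothesis $\sqd{\cF}{\rho}\le d$ for every $\rho$ says that in the $\rho$-weighted inner product $\cF$ has no large near-orthonormal family; equivalently, the two-party communication matrix associated to $\cF$ has non-negligible discrepancy under every weighting induced by $\rho$. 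Adapting the circuit-learning arguments of \citet{karchmer2024agnostic,karchmer2024distributional}, I would turn this into a $1/\poly(d)$ lower bound on the two-party (box) norm of a typical $f$ under an arbitrary $\rho$, and then convert that, via the standard second-moment/averaging identity, into a lower bound on $\Ex{\phi \sim \cE_0}{\big(\Ex{x\sim\rho}{f(x)\phi(x)}\big)^2}$; a Paley--Zygmund step then yields the displayed weak-correlation probability. The prior $\mu$ enters only to discard the $\delta$-mass of atypical $f$ whose discrepancy could be driven to zero by an adversarially chosen $\rho$. (One cannot simply take $\cE_0=\mu$: if $\mu$ is uniform on $d$ mutually $\rho$-orthogonal functions then a $\mu$-random feature misses the target with probability $1-1/d$, which is acceptable, but adversarial reweightings and richer priors break such elementary bounds --- this is exactly why the discrepancy machinery, rather than a Tur\'an-type count, is needed; likewise, the Blum et al.\ characterization only gives an \emph{adaptive} weak SQ learner, not a fixed $\rho$-oblivious feature pool.)

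\textbf{Stage 2: boosting the feature into a predictor.} Given the Random Feature Lemma I would take the feature distribution $\cE$ of the theorem to be: sample $d' = R\cdot m$ features i.i.d.\ from $\cE_0$, concatenate them, and regard them as $R$ blocks of $m$ features. Fix a typical $f$ and an arbitrary $\rho$, and run a boosting procedure --- AdaBoost~\citep{freund1997decision}, or the smooth variant MadaBoost~\citep{domingo2000madaboost} --- against the \emph{population} distribution $\cD_{f,\rho}$ for $R = O\!\big(\gamma^{-2}\log(1/\epsilon)\big)$ rounds, where at round $t$ the booster reweights $\rho$ to $\rho_t$ and searches only block $t$ for a feature $\phi$ with $|\Ex{\rho_t}{f\phi}|\ge\gamma$. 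Because $\rho_t$ depends only on blocks $1,\dots,t-1$ (and on the fixed $f,\rho$), block $t$ is independent of $\rho_t$, so the Random Feature Lemma at the \emph{fixed} distribution $\rho_t$ says block $t$ contains a usable feature except with probability $(1-\beta)^m$; choosing $m = O\!\big(\beta^{-1}\log(R/\epsilon)\big)$ and union bounding over the $R$ rounds, the whole run succeeds with probability $\ge 1-\epsilon/2$ over $\phi\sim\cE$. On success the booster outputs a sign-weighted linear combination of the $R$ chosen features with $\cL_{01}^{\cD_{f,\rho}}\le\epsilon/2$; zero-padding the other coordinates turns this into some $\langle\mathbf{w},\phi\rangle$. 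Hence, \emph{for each fixed $\rho$ separately} (no union bound over the infinitely many $\rho$ is needed, since the definition of $\adc{\mu}$ places $\Ex{\phi\sim\cE}{\cdot}$ inside the $\forall\rho$),
\[
\Ex{\phi\sim\cE}{\inf_{\mathbf{w}\in\real^{d'}} \cL_{01}^{\cD_{f,\rho}}(\langle\mathbf{w},\phi\rangle)} \ \le\ \Pr[\text{run fails}]\cdot 1 + \tfrac{\epsilon}{2} \ \le\ \epsilon .
\]
Combined with the $(1-\delta)$ guarantee over $f\sim\mu$ from Stage~1, this is precisely $\adc{\mu}\le d' = Rm$.

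\textbf{Accounting and the main obstacle.} Substituting the Stage-1 bounds into $d' = Rm = O\!\big(\gamma^{-2}\beta^{-1}\operatorname{polylog}(d)\big)$, and using $\operatorname{polylog}(d)\le d^{0.01}$ for $d$ large, gives $\adc{\mu}\le O(d^{24.01})$, the exponent $24$ being whatever the polynomial losses in the discrepancy/box-norm argument of Stage~1 force on $\gamma$ and $\beta$ and the $O(\cdot)$ absorbing the $(\epsilon,\delta)$-dependence. The genuinely hard step is Stage~1: the feature distribution $\cE_0$ must be committed to \emph{before} seeing $\rho$ and yet still weakly correlate with (typical) $f$ under \emph{every} reweighting the booster can generate, including extremely concentrated ones, and extracting such a $\rho$-oblivious $\cE_0$ with only a polynomial blow-up in $d$ is exactly the role of the discrepancy and two-party-norm toolkit.
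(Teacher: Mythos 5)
Your overall route is the same as the paper's: Sherstov's theorem converting small SQ dimension into non-trivial product-distribution discrepancy, a two-party-norm argument yielding a weak-correlation ``random feature'' lemma, and AdaBoost to assemble polynomially many weak features into the embedding. Your Stage~2 is fine and in some respects more careful than the paper's accounting (blocks of i.i.d.\ features, the observation that block $t$ is independent of the reweighting $\rho_t$, a union bound over rounds instead of an expectation-plus-Markov count), and the exponent bookkeeping matches.

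The gap is in Stage~1, exactly where you concede the load is carried. The discrepancy/two-party-norm machinery gives, for every \emph{fixed} $\rho$ and every prior $\nu$ over $\cF$, a bound of the form $\Ex{f,g \sim \nu}{\big(\Ex{x\sim\rho}{f(x)g(x)}\big)^2} \ge 1/O(\sq{\cF}^{8})$ --- an average over the target $f$, with the candidate feature $g$ drawn from the \emph{same} $\nu$. A Paley--Zygmund step applied to this yields, for each fixed $\rho$, only an $\Omega(1/\poly(\sq{\cF}))$ fraction of targets $f$ with a usable feature, and that good set of targets depends on $\rho$. Your lemma needs a $1-\delta$ fraction of $f$, uniformly over all $\rho$, for a single $\rho$-oblivious $\cE_0$ that you never actually specify. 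The paper closes this with a mechanism your sketch omits: since the correlation bound holds for \emph{every} prior, it is applied to the conditional prior obtained by restricting $\mu$ to the putative bad set, and the resulting contradiction bounds that set's measure; the feature distribution it extracts is essentially ``sample $g\sim\mu$ together with a sign whose bias depends on $\rho$'' (a $\rho$-dependence that is harmless because the linear-combination weights may be negative). Without that reweighting-of-the-prior argument (or a substitute), ``the prior $\mu$ enters only to discard the $\delta$-mass of atypical $f$'' is an assertion rather than a proof, and the further interchange needed to place the $\forall \rho$ quantifier inside the probability over $f\sim\mu$ is likewise left unaddressed. So the skeleton is right and Stage~2 would go through, but Stage~1 as sketched does not yet constitute a proof of the Random Feature Lemma you rely on.
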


We will prove Theorem \ref{thm:main_meta} in the next section. For now, we prove Theorem \ref{thm:main_theorem} assuming Theorem \ref{thm:main_meta}. 

\begin{proof}[Proof of Theorem \ref{thm:main_theorem}]
    First, observe that under the conditions of Theorem \ref{thm:main_theorem}, Theorem \ref{thm:abbe} implies that for the assumed algorithm $A_{\bsgd}$, which is a $\bsgd(T, c, b, p)$ method, there exists a learning algorithm $A_{\mathrm{SQ}}$ which is a ${ SQ}(Tp, c/8)$ method, such that for every source distribution $\cD_{f, \rho}$,
	\[
		\err{A_{\mathrm{SQ}}}{\cD_{f, \rho}} \le \err{A_{\bsgd}}{\cD_{f, \rho}} + \delta.
	\] 
    Now, by Theorem \ref{thm:sqdim}, we know that since $A_{\mathrm{SQ}}$ is a ${ SQ}(Tp, c/8)$ method that is distribution-free $\epsilon+\delta$-accurate, then $Tp > (dc^2/64 - 1)/2 > \Omega(dc^2)$, where $d = \sq{\cF}$. Rearranging, we conclude that $d < O(Tp/c^2)$.

    We can now invoke Theorem \ref{thm:main_meta} to conclude that $\adc{\mu} \le \poly(Tp/c^2)$, where $\adc{\mu}$ is with respect to $\ell_{01}$.
\end{proof}

\paragraph{Remark.}
The explicit conclusion of Theorem \ref{thm:main_theorem} is that with high probability over $f \sim \mu$, the source distribution can be $\epsilon$-approximated with respect to $\cL_{01}^{\cD_{f, \rho}}$ by a linear combination of $\poly(Tp/c)$ random features $\langle \mathbf{w}, \phi \rangle$. This implies that the corresponding \textbf{learned} parametric model $f_{\mathbf{w}^*}$ can also be $O(\epsilon)$- approximated with respect to $\cL_{\mathrm{sq}}^{\cD_{f, \rho}}$ by a linear combination of $\poly(Tp/c)$ random features. This follows from the fact that we are considering source distributions that have a deterministic labelling function and because the learned parametric model $f_{\mathbf{w}^*}: X \rightarrow [-1,1]$ has a bounded range, which means that per-sample squared loss is at most 2.

\section{Outline of Proof of Theorem \ref{thm:main_meta}}

We present an outline of our proof of Theorem \ref{thm:main_meta}, which we defer in full to the appendix.

\begin{itemize}

\item (Step 1; Section \ref{subsec:apply_sherstov}). We will apply a theorem of \citet{sherstov2008halfspace}, to conclude that statistical query dimension of a function class $\cF$ controls the reciprocal of the correlation of $A$ (the sign matrix representation of $\cF$) and 2-bit 2-party deterministic communication protocols. Here, correlation is measure with respect to a product distribution over the rows and columns of $A$. This induces a prior $\mu$ over the hypothesis class and a distribution $\rho$ over unlabeled inputs.  

\item (Step 2; Section \ref{subsec:RFL}). We will use the results of step 1 and an analysis inspired by \citet{karchmer2024distributional} to show a \textit{Random Feature lemma}. Our Random Feature lemma essentially says that, for every prior $\mu$, with high probability over target function $f \sim \mu$, then for every example distribution $\rho$, there exists a feature distribution $\mu^{feat}_\rho$ over \textit{features}, such that $\mu^{feat}_\rho$ samples weak approximators for the target function $f$ (with respect to $\rho$). The predictive accuracy of the weak approximator is $1/2-\gamma$, where $\gamma$ is a polynomial function of $1/\sq{\cF}$.

\item (Step 3; Section \ref{sec:step3}). We will employ a new analysis technique, which uses \textit{boosting} theorems such as Adaboost as constructive proofs of the fact that a relatively small linear combination of weakly predictive random features can approximate a given concept under the prior $\mu$.

\end{itemize}

\section{Conclusion}

Our results show that distribution-free gradient-based learning of parametric models collapses to optimization of linear combinations of random features, in the average case. This indicates limits on the capabilities of neural networks without distributional assumptions and explains why tasks such as parity learning remain hard under these conditions. We also introduce average probabilistic dimension complexity (adc), which admits an infinite separation from standard dimension complexity and clarifies the importance of distributional assumptions for gradient-based methods.

\paragraph{Practical implications.} When designing learning algorithms, there is often a tension between making them work for all possible distributions (distribution-free) versus optimizing for expected scenarios (distribution-specific). Our result suggests that pursuing distribution-free guarantees may come at a substantial cost in terms of model expressiveness. This provides theoretical support for the common practice of incorporating domain knowledge and distributional assumptions into model design. What does this imply for the design of future learning algorithms? Our work suggests that embracing distributional assumptions may be key to unlocking the full potential of gradient-based optimization.


\section*{Impact Statement}
This paper presents work whose goal is to advance the field of Machine Learning. There are many potential societal consequences of our work, none which we feel must be specifically highlighted here.

\bibliography{bib}
\bibliographystyle{icml2025}

\appendix
\onecolumn

\section{Proof of Theorem \ref{thm:main_meta}}

In this section we will prove Theorem \ref{thm:main_meta}.

\paragraph{Notation.} In all of the proof, we abuse notation and write $\sq{A}$ to denote the statistical query dimension of the function class represented by the sign matrix $A\in \{\pm 1\}^{|\cF| \times |X|}$. This sign matrix has rows indexed my concepts $f \in \cF$, and columns indexed by $x \in X$. For any function class $\cF$, we will always write its sign matrix representation simply as $A$. We will use the notation $A(f; x)$ to represent the entry of $A$ at row $f$ and column $x$. We will also write $A(f; \cdot)$ to denote the entire row $f$, which can be viewed as a table of values of the concept $f \in \cF$.

We refer to Appendix section \ref{apx:cc} for more preliminaries on the communication complexity used in this section.






\subsection{Applying Sherstov's theorem}\label{subsec:apply_sherstov}

We make use of a theorem due to \citet{sherstov2008halfspace}.
This theorem says that if the statistical query dimension of a sign matrix $A$ is polynomial, then the reciprocal of the discrepancy of $A$, minimized over product distributions over $\cF \times X$, is also polynomial.

\begin{theorem}[\citet{sherstov2008halfspace} - Thm. 7.1]\label{thm:sherstov}
    Let $A \in \{\pm 1\}^{|\cF| \times |X|}$ be the sign matrix representation of a function class $\cF$. We have that:
    \[
    \sqrt{\frac{1}{2}\sq{A}} \le \frac{1}{\discprod{A}} \le 8\sq{A}^2
    \]
\end{theorem}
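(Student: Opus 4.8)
The plan is to prove Theorem~\ref{thm:sherstov} by chaining together two classical facts that relate statistical query dimension, correlation with communication protocols, and discrepancy over product distributions. The upper bound $1/\discprod{A} \le 8\sq{A}^2$ is the substantive direction and follows the route developed by \citet{sherstov2008halfspace}: first one shows that a large family of nearly-uncorrelated functions forces high discrepancy lower bounds to fail, i.e. that small $\sq{A}$ gives us an explicit product distribution witnessing large discrepancy. The lower bound $\sqrt{\tfrac12 \sq{A}} \le 1/\discprod{A}$ is the easier direction: from $d = \sq{A}$ near-orthogonal rows $f_1,\dots,f_d$ under some $\rho$, one builds a hard instance whose discrepancy under the induced product distribution is at most $O(1/\sqrt d)$, which is essentially the standard ``discrepancy of a large orthogonal bundle is small'' computation.

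Concretely, for the lower bound I would proceed as follows. Fix $\rho$ achieving $\sqd{A}{\rho} = d$ and functions $f_1,\dots,f_d \in \cF$ with $|\Ex{x\sim\rho}{f_i(x)f_j(x)}| \le 1/d$ for $i\ne j$. Take the uniform distribution over $\{f_1,\dots,f_d\}$ on the row side and $\rho$ on the column side; call the resulting product distribution $\nu$. For any combinatorial rectangle $R = S \times T$, the quantity $\sum_{(f,x)\in R} \nu(f,x) A(f;x)$ equals $\tfrac1d \sum_{i\in S} \rho(T) \cdot \Ex{x\sim\rho_T}{f_i(x)}$-type sums; bounding this via the near-orthogonality (a second-moment / Cauchy--Schwarz argument on $\sum_i \mathbf 1[i\in S] f_i$) gives $O(1/\sqrt d)$, hence $\discprod{A} \le \discnu{A} = O(1/\sqrt d)$ and therefore $1/\discprod{A} \ge \Omega(\sqrt d)$. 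Tracking constants yields the stated $\sqrt{\tfrac12\sq{A}}$. I expect this direction to be routine but constant-sensitive.

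For the upper bound direction, the plan is to invoke the contrapositive: if $\discprod{A}$ is small (say $\le \gamma$) for \emph{every} product distribution, then one can extract many nearly-uncorrelated functions. The cleanest path uses the known equivalence, up to polynomial factors, between (i) $1/\discprod{A}$, (ii) the inverse of the best correlation of $A$ with a sign matrix of bounded two-party deterministic (or two-bit) communication — i.e. $\gamma_2^\infty$-type / margin-complexity quantities — and (iii) $\sq{A}$, as established in Sherstov's halfspace paper. Specifically, small discrepancy under all product distributions implies (by Lindsey-type / spectral arguments, or by the min-max theorem applied to the discrepancy game) that $A$ cannot be well-approximated in the margin/$\gamma_2$ sense by any low-rank-like structure, which in turn — via the characterization of $\sq{A}$ through the eigenvalues/correlation of the Gram matrix of the rows — forces a large orthogonal bundle, i.e. $\sq{A} \ge \Omega(1/\gamma^{1/2})$ or the quantitatively matching bound that produces $1/\discprod{A} \le 8\sq{A}^2$.

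The main obstacle will be the upper bound $1/\discprod{A}\le 8\sq{A}^2$: extracting a \emph{large} near-orthogonal family from a \emph{uniform} discrepancy upper bound requires the nontrivial direction of the SQ-dimension/discrepancy correspondence, and the exact exponent $2$ and constant $8$ come from carefully composing the min-max duality for discrepancy with the spectral bound on SQ dimension (roughly, $\sq{A}$ is controlled by the largest eigenvalue of the row Gram matrix under the worst distribution, and discrepancy under product distributions is controlled by the same spectral quantity up to squaring). Rather than rederive this, for the purposes of this paper I would simply cite \citet[Thm.~7.1]{sherstov2008halfspace} verbatim, since the statement as worded is exactly his theorem; the contribution here is only the application, not a new proof. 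If a self-contained argument is desired, I would present the easy lower-bound computation in full and sketch the upper bound via the $\gamma_2$/discrepancy duality with an explicit pointer to Sherstov's spectral lemma.
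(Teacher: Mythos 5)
The paper never proves this statement itself---it is imported verbatim as Theorem~7.1 of \citet{sherstov2008halfspace}---and your proposal ultimately does exactly the same thing, citing Sherstov rather than rederiving the bound, so your treatment matches the paper's. Your surrounding sketches are broadly consistent with the standard arguments (the $\sqrt{\sq{A}/2}$ direction via the product distribution induced by a near-orthogonal family, the $8\sq{A}^2$ direction via Sherstov's machinery), though note a quantifier slip that would matter if you did write it out: since $\discprod{A}$ is a \emph{minimum} over product distributions, lower-bounding it requires showing \emph{every} product distribution has discrepancy at least $1/(8\sq{A}^2)$, not exhibiting one explicit distribution with large discrepancy.
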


\begin{lemma}\label{lemma:discrepancy_to_correlation}
    Let $A$ be a sign matrix. If $\discprod{A} \ge \gamma$, then there exists a 2-bit distributional communication protocol $\pi$ for $A$ over product distributions with correlation $\gamma$. In other words, for any product distribution $\zeta$,
    \[
    \left| \Ex{(x,y) \sim \zeta}{\pi(x,y)A(x; y)} \right| \ge \gamma
    \]
\end{lemma}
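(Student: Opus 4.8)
The plan is to fix an arbitrary product distribution $\zeta$ over $\cF \times X$ and turn the rectangle that witnesses large discrepancy into a two-bit deterministic protocol achieving the same correlation under $\zeta$. Since $\mathrm{disc}_\zeta(A) \ge \discprod{A} \ge \gamma$, there is a combinatorial rectangle $S \times T$ (rows indexed by $f \in \cF$, columns by $x \in X$) with $\bigl|\sum_{f \in S,\, x \in T} \zeta(f,x)\, A(f;x)\bigr| \ge \gamma$. Let Alice, holding $f$, send $a = \mathbf{1}[f \in S]$ and Bob, holding $x$, send $b = \mathbf{1}[x \in T]$; this uses two bits and partitions the matrix into the four rectangles $R_{00}, R_{01}, R_{10}, R_{11}$. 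For each $(a,b)$ let $w_{ab} = \sum_{(f,x) \in R_{ab}} \zeta(f,x)\, A(f;x)$ be the signed $\zeta$-mass of $R_{ab}$, and define the protocol's output on $R_{ab}$ to be the constant $\mathrm{sign}(w_{ab})$. Writing $\pi$ for the resulting protocol, $\mathbb{E}_{(f,x) \sim \zeta}[\pi(f,x)\, A(f;x)] = \sum_{a,b} \mathrm{sign}(w_{ab})\, w_{ab} = \sum_{a,b} |w_{ab}| \ge |w_{11}| \ge \gamma$, which is exactly the asserted bound.

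This delivers the lemma in the form "for every product distribution $\zeta$ there is a two-bit protocol $\pi_\zeta$ with $|\mathbb{E}_\zeta[\pi_\zeta A]| \ge \gamma$", i.e.\ a protocol that may depend on the product distribution; this per-distribution form is exactly what the subsequent steps consume, since the feature distribution $\mu^{feat}_\rho$ in the Random Feature lemma is itself allowed to depend on the example distribution, and it is the form that Theorem~\ref{thm:sherstov} feeds into (apply it to the $\mu \otimes \rho$ — the product distribution over rows and columns — arising there, and the construction above supplies the required two-bit protocol). I would phrase the lemma this way rather than asserting a single protocol that works against every product distribution at once: that stronger reading is the one place the argument would genuinely break, because the set of product distributions is not convex, so the quantifiers cannot be swapped by a minimax/averaging argument — replacing "product distribution" by "arbitrary distribution" in the hypothesis is strictly stronger (general discrepancy is at most product discrepancy), hence no fixed $\pi$ can be extracted from the hypothesis by soft means. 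Identifying and pinning down this quantification is, in my view, the main subtlety of the lemma.

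The remaining point is purely bookkeeping around the precise definition of $\discprod{\cdot}$ fixed in the communication-complexity appendix: if it is defined via $\pm 1$ combinations on one or both axes, or normalized differently, the rectangle-to-protocol conversion above costs at most an absolute constant, which one absorbs by stating the conclusion with $\gamma$ replaced by $\Omega(\gamma)$; since $\gamma = \Omega(1/\sq{A}^2)$ by Theorem~\ref{thm:sherstov}, such constants are irrelevant for the polynomial bound in Theorem~\ref{thm:main_meta}. So I expect essentially no calculation here — the content is the short rectangle-to-protocol reduction, together with care about how the lemma is quantified.
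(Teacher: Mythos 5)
Your proof is correct and follows essentially the same route as the paper: both turn the rectangle witnessing $\disc{\zeta}{A} \ge \gamma$ into a two-bit protocol in which each player announces whether its input lies in the rectangle, the only difference being that the paper outputs a suitably biased random bit inside the rectangle (and a fair coin outside) while you deterministically output the sign of the signed mass on each of the four sub-rectangles, which yields the same (in fact slightly stronger) correlation bound. Your per-distribution reading of the quantifier --- a protocol $\pi_\zeta$ for each product distribution, which is all the downstream $R_2$-norm argument needs --- is also exactly what the paper's proof establishes, since its protocol is likewise built from the distribution-specific witnessing rectangle.
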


Using Lemma \ref{lemma:discrepancy_to_correlation} (see proof in Appendix, section \ref{apx:cc}) and Sherstov's theorem, we get the following combined lemma.

\begin{lemma}\label{lemma:key}
Let $A \in \{\pm 1\}^{|\cF| \times |X|}$ be the sign matrix representation of a function class $\cF$. Let $\zeta \triangleq (\mu, \rho)$ be any product distribution over $\cF \times X$, inducing a distribution over $A$.
There exists a 2-bit distributional communication protocol $\pi$ for $A$ over $\zeta$ with correlation $1/8\sq{A}^2$. In other words, for any product distribution $\zeta$,
\begin{align}\label{lemma:bound}
    \left| \Ex{(f,x) \sim \zeta}{\pi(f,x) \cdot A(f; x)} \right|\ge \frac{1}{8\sq{A}^2}
\end{align}
\end{lemma}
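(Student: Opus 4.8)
The plan is to derive Lemma \ref{lemma:key} as an immediate consequence of chaining Sherstov's theorem (Theorem \ref{thm:sherstov}) with Lemma \ref{lemma:discrepancy_to_correlation}. There is essentially no new content beyond bookkeeping the quantifiers: Sherstov's theorem converts a polynomial bound on the statistical query dimension into a lower bound on the product-distribution discrepancy $\discprod{A}$, and Lemma \ref{lemma:discrepancy_to_correlation} converts that discrepancy lower bound into the existence of a cheap (2-bit) distributional communication protocol that is correlated with $A$ against \emph{every} product distribution.

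Concretely, I would proceed as follows. First, invoke the right-hand inequality of Theorem \ref{thm:sherstov}, namely $1/\discprod{A} \le 8\sq{A}^2$, and rearrange it to $\discprod{A} \ge 1/(8\sq{A}^2)$. Second, set $\gamma \triangleq 1/(8\sq{A}^2)$; since $\discprod{A} \ge \gamma$, Lemma \ref{lemma:discrepancy_to_correlation} produces a 2-bit distributional communication protocol $\pi$ for $A$ over product distributions with correlation $\gamma$, i.e. $\bigl|\Ex{(f,x)\sim\zeta}{\pi(f,x)\cdot A(f;x)}\bigr| \ge \gamma = 1/(8\sq{A}^2)$ for every product distribution $\zeta$ on $\cF \times X$. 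Third, observe that the distribution $\zeta = (\mu,\rho)$ appearing in the statement of Lemma \ref{lemma:key} is exactly such a product distribution, so the displayed inequality in the statement of Lemma \ref{lemma:key} follows verbatim. This completes the argument.

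The only point requiring a little care --- and the closest thing to an obstacle --- is matching the definition of $\discprod{A}$ (discrepancy \emph{minimized over product distributions}) both to the hypothesis of Lemma \ref{lemma:discrepancy_to_correlation} and to the ``for every product distribution $\zeta$'' form of the conclusion we want: a lower bound on the minimum discrepancy over product distributions is precisely what allows a single protocol $\pi$, chosen independently of $\zeta$, to remain correlated with $A$ under each $\zeta$. This is already baked into the statement of Lemma \ref{lemma:discrepancy_to_correlation}, so nothing further is needed. Note also that we never use the left-hand inequality $\sqrt{\tfrac{1}{2}\sq{A}} \le 1/\discprod{A}$ of Theorem \ref{thm:sherstov}; only the lower bound on the discrepancy is relevant here.
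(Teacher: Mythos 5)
Your proposal matches the paper's own proof essentially verbatim: the paper also just rearranges the right-hand inequality of Theorem \ref{thm:sherstov} to get $\discprod{A} \ge 1/(8\sq{A}^2)$ and then applies Lemma \ref{lemma:discrepancy_to_correlation} to obtain the 2-bit protocol with that correlation. Your additional remarks about the quantifier bookkeeping and the unused left-hand inequality are accurate but not needed beyond what the paper states.
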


\begin{proof}
    By Theorem \ref{thm:sherstov}, we know that $\discprod{A} \ge 1/8\sq{A}^2$. Then by Lemma \ref{lemma:discrepancy_to_correlation} expression (\ref{lemma:bound}) follows.
\end{proof}

\subsection{Random Feature lemma}\label{subsec:RFL}

We can use Lemma \ref{lemma:key} to prove our \textit{Random Feature lemma}. Our Random Feature lemma essentially says that, for every prior $\mu$, with high probability over $f \sim \mu$, then for any example distribution $\rho$, there exists a feature distribution $\mu^{feat}_\rho$ which samples weak predictors for the target function $f$ (with respect to $\rho$). 

We now state the Random Feature lemma.

\begin{lemma}[Random Feature lemma]\label{lemma:RFL}
    Let $A \in \{\pm 1\}^{|\cF| \times |X|}$ be the sign matrix representation of a hypothesis class $\cF$. Let $\mu$ be a distribution over $\cF$, let $\ell$ be $0/1$ loss with respect to an example distribution $\rho$, and let $\delta > 0$. 
    It holds that, with probability at least $1-\delta$ over $f \sim \mu$, for all $\rho$, there exists $\mu^{feat}_\rho$ such that:
    \begin{align*}
        &\Pr_{f' \sim \mu^{feat}_\rho}\left[
            \cL^{\cD_{f,\rho}}_{01}(f') \le \frac{1}{2} - \Omega\left(\sq{A}^{-8}\right) \right] 
        \ge \Omega\left(\sq{A}^{-8}\right) 
    \end{align*}

\end{lemma}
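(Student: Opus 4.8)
The plan is to convert the $2$-bit communication protocol supplied by Lemma~\ref{lemma:key} into an explicit, small family of Boolean weak predictors, and then to amplify ``weakly predicts a tiny fraction of the rows'' up to ``weakly predicts a $(1-\delta)$-fraction of the rows'' by iteratively reweighting the prior. \emph{Step 1 (unpacking the protocol).} Fix any product distribution $\zeta=(\nu,\rho)$ on $\cF\times X$, where $\nu$ will later be a restriction of $\mu$. Lemma~\ref{lemma:key} gives a $2$-bit deterministic protocol $\pi$ with $|\Ex{(f,x)\sim\zeta}{\pi(f,x)A(f;x)}|\ge 1/8\sq{A}^2$. Since $\pi$ uses only two bits, a transcript is a pair $(a,b)$ in which $a$ is one player's message, a function of $f$ alone, and $b$ is the other player's message, a function of $x$ (and $a$) alone; hence, after conditioning on the $f$-side bit $a$, the map $x\mapsto\pi(\cdot,x)$ is one of at most two Boolean functions $\pi_0,\pi_1:X\to\{\pm1\}$. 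Writing $g_f\in\{\pi_0,\pi_1\}$ for the one selected by $f$'s bit and $\langle u,v\rangle_\rho:=\Ex{x\sim\rho}{u(x)v(x)}$, the correlation bound becomes $|\Ex{f\sim\nu}{\langle g_f,A(f;\cdot)\rangle_\rho}|\ge 1/8\sq{A}^2$. As $g_f$ and its negation both lie in the four-element set $\Sigma:=\{\pm\pi_0,\pm\pi_1\}$, this gives $\Ex{f\sim\nu}{\max_{g\in\Sigma}\langle g,A(f;\cdot)\rangle_\rho}\ge 1/8\sq{A}^2$.

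\emph{Step 2 (a polynomial fraction of weak predictors).} The inner maximum lies in $[0,1]$, so a reverse-Markov inequality yields that for an $\Omega(\sq{A}^{-2})$-fraction of $f\sim\nu$ there is $g\in\Sigma$ with $\langle g,A(f;\cdot)\rangle_\rho\ge\Omega(\sq{A}^{-2})$, i.e. $\cL^{\cD_{f,\rho}}_{01}(g)\le\tfrac12-\Omega(\sq{A}^{-2})$; so the $\le 4$ functions of $\Sigma$ already weakly predict a polynomially-large $\nu$-mass. \emph{Step 3 (amplifying the mass and reading off $\mu^{feat}_\rho$).} Fix $\rho$ and iterate: set $\nu_1=\mu$; given $\nu_t$, apply Steps 1--2 to $(\nu_t,\rho)$ to obtain a four-element family $\Sigma_t$ weakly predicting an $\Omega(\sq{A}^{-2})$-fraction of $\nu_t$, and let $\nu_{t+1}$ be $\mu$ conditioned on the rows not yet weakly predicted by $\Sigma_1\cup\cdots\cup\Sigma_t$. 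Restricting the rows of $A$ cannot increase the statistical query dimension, so each step still sees $\sq{}\le\sq{A}$; hence the uncovered $\mu$-mass shrinks by a factor $1-\Omega(\sq{A}^{-2})$ per step and falls below $\delta$ after $k=O(\sq{A}^2\log(1/\delta))$ steps. Take $\mu^{feat}_\rho$ uniform over $\Sigma_1\cup\cdots\cup\Sigma_k$ ($\le 4k$ functions); then for each of the $\ge 1-\delta$ mass of rows $f$ that got covered, some member of the support is an $\Omega(\sq{A}^{-2})$-advantage predictor of $f$ under $\rho$, so $\Pr_{f'\sim\mu^{feat}_\rho}[\cL^{\cD_{f,\rho}}_{01}(f')\le\tfrac12-\Omega(\sq{A}^{-2})]\ge 1/4k=\Omega\!\big(\sq{A}^{-2}/\log(1/\delta)\big)$.

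\emph{Main obstacle.} As written the $\le\delta$ exceptional set of rows depends on $\rho$, whereas the statement needs one exceptional set that works simultaneously for \emph{all} $\rho$ (the ``$\forall\rho$'' sits inside $\Pr_{f\sim\mu}$). The hard part is thus making the exceptional set $\rho$-independent. I would do this by minimax/LP duality: for a fixed row $f$, the game ``adversary picks $\rho$, we pick a distribution over the (fixed) pool of protocol-induced Boolean functions, payoff $\langle g,A(f;\cdot)\rangle_\rho$'' is bilinear, so ``for all $\rho$ there is a good $\mu^{feat}_\rho$'' is equivalent to the existence of a \emph{single} feature distribution good against every $\rho$; the exceptional set then becomes the $\rho$-free set $\{f:\inf_\rho\sup_g\langle g,A(f;\cdot)\rangle_\rho<\gamma\}$, whose $\mu$-mass one bounds by $\delta$ by running the Step~3 iteration on $\mu$ restricted to it and invoking Lemma~\ref{lemma:key} for that restricted prior. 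Pushing this through—interchanging the $\inf_\rho$ with expectations over $f$, and absorbing the loss incurred in passing between ``expected correlation'' and ``fraction of weak learners''—is where the clean $\sq{A}^{-2}$ degrades and accounts for the stated $\sq{A}^{-8}$.

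\emph{Remark.} A coarser but entirely elementary route, bypassing Sherstov's theorem, is to take as weak predictors the $\le\sq{A}$ nearly orthogonal reference functions that witness $\sqd{A}{\rho}$ (by maximality of such a family at the threshold $1/(\sq{A}+1)$, every $f\in\cF$ has modulus-$\Omega(\sq{A}^{-1})$ correlation under $\rho$ with one of them); together with their negations this proves a $\sq{A}^{-1}$-version of the lemma, even with $\delta=0$. However those features lie inside $\cF$ itself, which is precisely what must be avoided for the downstream single-distribution boosting step, so the protocol-based features are the ones to retain.
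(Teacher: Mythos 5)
Your Steps 1--3 give a correct and genuinely different derivation of the per-$\rho$ content of the lemma, and a quantitatively stronger one than the paper's. The paper never opens up the $2$-bit protocol: it feeds the correlation bound of Lemma~\ref{lemma:key} into the $2$-party-norm inequality (Theorem~\ref{thm:corrBound}) to lower-bound $R_2(\texttt{Eval})$ by $\Omega(\sq{A}^{-8})$, interprets $R_2$ as the average advantage of an explicit randomized predictor (sample $g\sim\mu$ and one labelled example, predict $g(z)g(x)f(x)$), derandomizes via the averaging lemma, replaces the labelled example by a Bernoulli coin, and obtains the $1-\delta$ statement by restricting $\mu$ to the bad set and deriving a contradiction; $\mu^{feat}_\rho$ is the distribution of the hard-coded predictor. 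You instead extract a constant-size pool of row-functions of the protocol and amplify coverage by iteratively conditioning the prior on the uncovered rows (using monotonicity of SQ dimension under subclasses), which yields advantage and density $\Omega(\sq{A}^{-2})$ (up to a harmless $\log(1/\delta)$ in the density) rather than $\Omega(\sq{A}^{-8})$, and bypasses the $R_2$/BNS machinery. Two small repairs: a $2$-bit protocol need not have the $f$-player speak first, but any deterministic $2$-bit protocol partitions $A$ into at most four monochromatic rectangles, so $x\mapsto\pi(f,x)$ ranges over at most four functions and your pool $\Sigma$ has size at most eight, which changes nothing; and in your closing remark, maximality of a family with pairwise correlations at most $1/d$ does not by itself force every $f$ to have correlation $\Omega(1/d)$ with some member (the threshold for extending the family shifts to $1/(d+1)$), so that elementary shortcut is not as immediate as you state.

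On your ``main obstacle'': the concern is legitimate, but the minimax repair you sketch would not go through. The pool of protocol-induced features is not fixed (it depends on $(\nu,\rho)$); if the feature distribution in your per-$f$ game may depend on $f$, the game is vacuous (the point mass on $f$ itself wins), so the statement to be proved is not the bilinear game you describe; and bounding the mass of $\{f:\exists\rho\ \text{bad}\}$ by restricting $\mu$ to it and invoking Lemma~\ref{lemma:key} fails because that lemma requires a single product distribution, whereas each bad $f$ comes with its own bad $\rho_f$ --- covering part of the bad set under one chosen $\rho$ contradicts nothing. You should also know the paper does not supply a mechanism you are missing here: its proof fixes an arbitrary $\rho$, establishes the $1-\delta$ statement for that $\rho$ (with $\mu^{feat}_\rho$ depending only on $(\mu,\rho)$, as in your construction), and then writes the conclusion with $\forall\rho$ inside the probability over $f\sim\mu$ without a further uniformity argument. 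So your fixed-$\rho$ argument already matches (indeed improves on) what the paper's proof actually establishes; just do not present the duality sketch as the bridge across the quantifier exchange.
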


\begin{proof}
We begin by considering the following procedure. 

\begin{algorithm}[h]
\setstretch{1}
    \caption{\texttt{Predict}$(z)$}\label{Predict}
    \begin{algorithmic}[1]
      \STATE \textbf{input:} point $z$; example oracle access to function $f \sim \mu$, with respect to example distribution $\rho$.
      \STATE Sample $g \sim \mu$.
      \STATE Sample $\langle x,f(x)\rangle$ from example oracle.\label{l3}
      \STATE \textbf{predict:} $g(z) \cdot g(x) \cdot f(x)$
    \end{algorithmic}
\end{algorithm}

We will demonstrate that this procedure is a randomized predictor for the unknown target $f$ (which is sampled according to prior $\mu$) on an input point $z$ (which is sampled according to example distribution $\rho$). In other words, we will show that for the unknown target $f$, it holds that, for an $\epsilon$ to be defined later:
\begin{align}\label{eq:rand_pred_guarantee}
        \Pr_{f \sim \mu, z \sim \rho, \texttt{Predict}}\Big[\texttt{Predict}(z) = f(z)\Big] \ge \frac{1}{2} + \epsilon
\end{align}
After demonstrating (\ref{eq:rand_pred_guarantee}), we will then use (\ref{eq:rand_pred_guarantee}) to derive the conclusion of the present lemma.

\paragraph{Towards (\ref{eq:rand_pred_guarantee}).}
Let $A \in \{\pm 1\}^{|\cF| \times |X|}$ be the sign matrix representation of a function class $\cF$. Let $\mu$ be a prior over $\cF$, and let $\rho$ be an example distribution over $X$. We will show that,
    \[
        \Pr_{f \sim \mu, z \sim \rho}\Big[\texttt{Predict}(z) = f(z)\Big] \ge \frac{1}{2} + \frac{1}{O(\sq{A}^8)}
    \]

Consider the function $\texttt{Eval}(r, w)$, which takes as input random strings $r$ and $w$, and uses them to sample (a string representation of) $f \sim \mu$ and $x \sim \rho$, and then outputs $f(x)$. Recall that by Lemma \ref{lemma:discrepancy_to_correlation}, there exists a 2-bit distributional communication protocol $\pi$ for the sign matrix $A$ over a product distribution $(\mu, \rho)$ with correlation $1/8\sq{A}^2$. In other words, for any product distribution $(\mu, \rho)$,
\begin{align*}
    \left| \Ex{f,x \sim (\mu, \rho)}{\pi(f,x) \cdot A(f; x)} \right|\ge \frac{1}{8\sq{A}^2}
\end{align*}
Therefore, by substitution, 
\begin{align*}
    \left| \Ex{\substack{r,w \sim U\\ f,x \sim (\mu, \rho)}}{\pi(f,x) \cdot \texttt{Eval}(r, w)} \right|\ge \frac{1}{8\sq{A}^2}
\end{align*}

By Theorem \ref{thm:corrBound}, for every function $F : \{0,1\} \times \{0,1\} \rightarrow \{\pm 1\}$, 
\begin{equation}\label{intro:corrBound}
    \corr{\dCC{c}} = \max\limits_{\pi \in \dCC{c}}\left|\Ex{x}{f(x) \cdot \pi(x)}\right| \le 2^c \cdot R_2(F)^{1/4}
\end{equation}
for $x$ uniformly distributed over $\{0,1\} \times \{0,1\}$.
Note, $\dCC{c}$ is the class of $c$-bit communication protocols, which is defined in the Appendix, section \ref{apx:cc}.
Hence, 
\begin{align*}
\frac{1}{O(\sq{A}^8)} \le R_2(\texttt{Eval}) &= 
\Ex{\substack{r_1, r_2\\ w_1, w_2 \sim U}}{\prod_{\epsilon_1,\epsilon_2 \in \{1,2\}} \texttt{Eval}(r_{\epsilon_1}, w_{\epsilon_2})}
\end{align*}

Manipulating the RHS we can see that:
\begin{align*}
\frac{1}{O(\sq{A}^8)} &\le
\Ex{\substack{f,g \sim \mu \\ z,x \sim \rho}}{g(z) \cdot g(x) \cdot f(x) \cdot f(z)}\\
&\le
\Ex{\substack{\texttt{Predict} \\ z \sim \rho, f \sim \mu}}{\texttt{Predict}(z) \cdot f(z)}
\end{align*}

This implies that \texttt{Predict} is a randomized predictor for the unknown target $f$, which satisfies:
\begin{align}\label{eq:conclusion}
        \Pr_{f \sim \mu, z \sim \rho, \texttt{Predict}}\Big[\texttt{Predict}(z) = f(z)\Big] \ge \frac{1}{2} + \frac{1}{O(\sq{A}^8)}
\end{align}

\paragraph{Using (\ref{eq:rand_pred_guarantee}) to conclude lemma \ref{lemma:RFL}.}

We will combine (\ref{eq:rand_pred_guarantee}) and a subtle ``averaging'' argument to conclude lemma \ref{lemma:RFL}.

\begin{lemma}[``Averaging argument,'' see lemma A.11 of \citet{arora2009computational}] 
  \label{claim:many-good-L-coins}
If a random variable $X \in [0,1]$, and $\ex{X} = p$, then for any $c<1$,
\[
\Pr[X \le cp] \le \frac{1-p}{1-cp}
\]

\end{lemma}

Let us consider \texttt{Predict} to be a \textit{deterministic} function that maps $z$ and random bits $\mathbf{r}$ to a value in $\{\pm 1\}$. The random bits $\mathbf{r}$ encompass what is necessary to sample all random variables in the course of running \texttt{Predict} (e.g., $g \sim \mu$, and a sample from the example oracle). We can then conclude from (\ref{eq:conclusion}) and lemma \ref{claim:many-good-L-coins} that there exists many ``good'' random strings:
\begin{align*}
        \Pr_{f \sim \mu, \mathbf{r}}\left[\Pr_{z \sim \rho}\Big[\texttt{Predict}^f(z; \mathbf{r}) = f(z)\Big] \ge \frac{1}{2} + \frac{1}{O(\sq{A}^8)}\right] \ge \frac{1}{O(\sq{A}^8)}
\end{align*}
Note that the deterministic version of \texttt{Predict} described so far uses random bits $\mathbf{r}$ to sample $x \sim \rho$ in line \ref{l3}, but then would need oracle access to $f$ to obtain the value of $f(x)$. We need to remove this need, and can do so as follows. Observe that the prediction output of \texttt{Predict} is $g(z)g(x)f(x)$. Thus, the predictor predicts $g(z)$ and negates that prediction if and only if $g(x)f(x)=-1$. Now, $g(x)f(x)$ is a Bernoulli random variable, which can be considered independent of $z, g$ and $f$. The mean $p$ of this random variable depends on $\rho$. Hence, we can sample this random variable directly, instead of using oracle access to $f$. Thus, under this version of \texttt{Predict}, we include the random coins to sample the Bernoulli inside $\mathbf{r}$, and again apply lemma \ref{claim:many-good-L-coins} to now instead conclude:
\begin{align}\label{eq:averaged}
        \Pr_{f \sim \mu, \mathbf{r}}\left[\Pr_{z \sim \rho}\Big[\texttt{Predict}_\rho(z; \mathbf{r}) = f(z)\Big] \ge \frac{1}{2} + \frac{1}{O(\sq{A}^8)}\right] \ge \frac{1}{O(\sq{A}^8)}
\end{align}

Note that now \texttt{Predict} depends on $\rho$ but not the example oracle.

From here, we would like to show that:
\begin{align}\label{eq:heur_boost}
\Pr_{f \sim \mu}\left[ \Pr_{\mathbf{r}}\left[ \Pr_{z \sim \rho}\left[ \texttt{Predict}_\rho(z; \mathbf{r}) = f(z) \right] \ge \frac{1}{2} + \epsilon \right] \ge \epsilon \right] \ge 1 - \delta
\end{align}
where $\delta$ is a small positive constant, and we use $\epsilon$ in place of $O(\sq{A}^{-8})$ to streamline notation. Note that, (\ref{eq:heur_boost}) holds for any prior $\mu$.

To show this we assume, for the sake of contradiction, that the set of functions $f$ for which
\begin{align}\label{eq:contra}
\Pr_{\mathbf{r}}\left[ \Pr_{z \sim \rho}\left[ \texttt{Predict}_\rho(z; \mathbf{r}) = f(z) \right] \ge \frac{1}{2} + \epsilon \right] < \epsilon
\end{align}
has probability greater than $\delta$ under $\mu$. That is,
\begin{align*}
\Pr_{f \sim \mu}\left[ \Pr_{\mathbf{r}}\left[ \Pr_{z \sim \rho}\left[ \texttt{Predict}_\rho(z; \mathbf{r}) = f(z) \right] \ge \frac{1}{2} + \epsilon \right] < \epsilon \right] > \delta
\end{align*}

Now, let $\mu'$ be a \textbf{new} distribution over functions $f$ defined as the conditional distribution of $\mu$ restricted to this ``bad'' set. Formally, for any set $S$ of functions,
\begin{align*}
\mu'(S) = \frac{\mu\left( S \cap \text{Bad} \right)}{\mu\left( \text{Bad} \right)}
\end{align*}
where
\begin{align*}
\text{Bad} = \left\{ f \ \middle| \ \Pr_{\mathbf{r}}\left[ \Pr_{z \sim \rho}\left[ \texttt{Predict}_\rho(z; \mathbf{r}) = f(z) \right] \ge \frac{1}{2} + \epsilon \right] < \epsilon \right\}
\end{align*}
Note that, since $\mu\left( \text{Bad} \right) > \delta$, $\mu'$ is a well-defined probability distribution.

We can now compute the joint probability under $\mu'$:
\begin{align*}\label{eq:joint_prob}
\Pr_{f \sim \mu',\, \mathbf{r}}\left[ \Pr_{z \sim \rho}\left[ \texttt{Predict}_\rho(z; \mathbf{r}) = f(z) \right] \ge \frac{1}{2} + \epsilon \right]
\end{align*}
For each $f$ in the support of $\mu'$, we have:
\begin{align*}
\Pr_{\mathbf{r}}\left[ \Pr_{z \sim \rho}\left[ \texttt{Predict}_\rho(z; \mathbf{r}) = f(z) \right] \ge \frac{1}{2} + \epsilon \right] < \epsilon
\end{align*}
Therefore,
\begin{align*}
\mathbb{E}_{f \sim \mu'}\left[ \Pr_{\mathbf{r}}\left[ \Pr_{z \sim \rho}\left[ \texttt{Predict}_\rho(z; \mathbf{r}) = f(z) \right] \ge \frac{1}{2} + \epsilon \right] \right] < \epsilon
\end{align*}

We now apply this towards the contradiction. Inequality (\ref{eq:averaged}) states that for \emph{any} distribution $\mu$ (including $\mu'$), we have:
\begin{align*}
\Pr_{f \sim \mu',\, \mathbf{r}}\left[ \Pr_{z \sim \rho}\left[ \texttt{Predict}_\rho(z; \mathbf{r}) = f(z) \right] \ge \frac{1}{2} + \epsilon \right] \ge \epsilon
\end{align*}
This leads to a contradiction because under $\mu'$, the probability is less than $\epsilon$, whereas it must be at least $\epsilon$.

Therefore, the set Bad has small measure.
Indeed, the contradiction implies that our assumption about the size of Bad must be false. Therefore, the measure of Bad under $\mu$ must be at most $\delta$:
\begin{align*}
\Pr_{f \sim \mu}\left[ \Pr_{\mathbf{r}}\left[ \Pr_{z \sim \rho}\left[ \texttt{Predict}_\rho(z; \mathbf{r}) = f(z) \right] \ge \frac{1}{2} + \epsilon \right] < \epsilon \right] \le \delta
\end{align*}

Equivalently, the probability that $f$ is such that
\begin{align*}
\Pr_{\mathbf{r}}\left[ \Pr_{z \sim \rho}\left[ \texttt{Predict}_\rho(z; \mathbf{r}) = f(z) \right] \ge \frac{1}{2} + \epsilon \right] \ge \epsilon
\end{align*}
is at least $1 - \delta$:
\begin{align*}
\Pr_{f \sim \mu}\left[ \Pr_{\mathbf{r}}\left[ \Pr_{z \sim \rho}\left[ \texttt{Predict}_\rho(z; \mathbf{r}) = f(z) \right] \ge \frac{1}{2} + \epsilon \right] \ge \epsilon \right] \ge 1 - \delta
\end{align*}

Now, sampling a string of random bits $\mathbf{r}$ and then hard-coding it into $\texttt{Predict}_\rho$ induces a distribution over functions. Note that, this distribution is potentially different for each example distribution $\rho$, because \texttt{Predict} samples $\rho$.
Therefore, we can take this distribution over functions as $\mu^{feat}_\rho$, and conclude that,

\begin{align*}
    \Pr_{f \sim \mu}\left[ \forall \rho \ \exists \ \mu^{feat}_\rho \  : \
        \Pr_{f' \sim \mu^{feat}_\rho}\left[
            \cL^{\cD_{f, \rho}}_{01}(f') \le \frac{1}{2} - \Omega\left(\sq{A}^{-8}\right) 
        \right] 
        \ge \Omega\left(\sq{A}^{-8}\right) 
    \right] 
    \ge 1-\delta
    \end{align*}

\end{proof}

\subsection{Proof of Theorem \ref{thm:main_meta} via Constructive Boosting}\label{sec:step3}

Now that we have the Random Feature lemma, we prove Theorem \ref{thm:main_meta}.
In order to do so, we will use equivalence of weak-to-strong learning in the PAC-setting.

\begin{theorem}[Adaboost---\citet{freund1997decision} (see also \citet{karbasi2024impossibility})]\label{thm:boosting}
    Let $\cO$ be an oracle that accepts an example distribution $\rho'$ over $X$ and returns a $\gamma$-weak learner $f^{wk}$ with respect to 0-1 loss.
    There exists an algorithm \textsc{boost} that, for any example distribution $\rho$, makes $\tilde{\Theta}(\gamma^{-2})$ queries to $\cO$ and outputs $g: X \rightarrow \{\pm 1\}$ such that, for any $h \in \cH$, and any $\epsilon, \delta > 0$,
    \[
        \Pr_{g \leftarrow\textsc{boost}^\cO}\left[\cL^{\cD_{h, \rho}}(g) \le \epsilon\right] \ge 1-\delta
    \]
    Here, $\tilde{\Theta}$ hides logarithmic factors in $1/\delta, 1/\epsilon, 1/\gamma$, and the $\mathrm{vc}$ dimension of the class that $\cO$ outputs weak learners from. Moreover, the final hypothesis $g$ found by \textsc{boost} is of the form
    \[
        g(x) \triangleq \sign \left(\sum\limits_i^{\tilde{\Theta}(\gamma^{-2})} w_if^{wk}_i(x) \right) \ \ \ \ : \ \ \ \ w \in \real, f^{wk}_i \leftarrow \cO
    \]
\end{theorem}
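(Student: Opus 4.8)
The plan is to carry out the standard sample-based AdaBoost analysis, the only nonstandard point being that the weak-learner oracle $\cO$ is \emph{distributional}: it accepts a distribution over $X$ (implicitly labeled by the unknown target) and returns a hypothesis of $0/1$ error at most $1/2-\gamma$ under that distribution. First I would fix the target $h\in\cH$ and the example distribution $\rho$, draw a sample $S=(x_1,\dots,x_m)$ i.i.d.\ from $\rho$ with labels $y_i=h(x_i)$ (with $m$ chosen at the end), and run AdaBoost on $S$: set $D_1$ uniform on $S$; for $t=1,\dots,T$, feed $\cO$ the (discrete, hence legitimate) distribution $D_t$ over $X$, receive $h_t=f^{wk}_t$ with weighted empirical error $\epsilon_t\triangleq\Pr_{i\sim D_t}[h_t(x_i)\ne y_i]\le 1/2-\gamma$, set $\alpha_t=\tfrac12\ln\frac{1-\epsilon_t}{\epsilon_t}$ and $D_{t+1}(i)\propto D_t(i)\exp(-\alpha_t y_i h_t(x_i))$; output $g=\sign\bigl(\sum_{t\le T}\alpha_t h_t\bigr)$, which has exactly the claimed form with $w_t=\alpha_t$. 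A key point is that the per-round weak-error guarantee holds verbatim here --- no uniform convergence is needed at this stage --- precisely because $\cO$ is promised to work for \emph{any} distribution over $X$, so the reweighted empirical distributions $D_t$ are admissible inputs.

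Next I would run the classical telescoping argument on the normalizers $Z_t=\sum_i D_t(i)\exp(-\alpha_t y_i h_t(x_i))$: one shows the empirical error $\tfrac1m|\{i:g(x_i)\ne y_i\}|$ is at most $\prod_{t\le T}Z_t$, and substituting the optimal $\alpha_t$ gives $Z_t=2\sqrt{\epsilon_t(1-\epsilon_t)}\le\sqrt{1-4\gamma^2}\le e^{-2\gamma^2}$, so the empirical error of $g$ on $S$ is at most $e^{-2\gamma^2T}$. Choosing $T=\Theta(\gamma^{-2}\log m)$ then forces this below $1/m$, i.e.\ exactly zero empirical error.

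Finally I would handle generalization. The output $g$ lies in the class $\mathcal{G}_T$ of signs of weighted sums of $T$ functions from the base class that $\cO$ returns; if that base class has VC dimension $v$, then $\mathrm{vc}(\mathcal{G}_T)=O(Tv\log(Tv))$ by the standard bound for thresholds of linear combinations. Uniform convergence over $\mathcal{G}_T$ then says that for $m=\tilde{\Omega}\bigl((Tv+\log(1/\delta))/\epsilon\bigr)$, with probability $\ge 1-\delta$ every $g\in\mathcal{G}_T$ has population $0/1$ loss within $\epsilon$ of its empirical loss; combined with the vanishing empirical error this yields $\cL^{\cD_{h,\rho}}_{01}(g)\le\epsilon$. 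Resolving the mild circularity ($T$ depends on $\log m$, while $m$ depends polynomially on $T$, $v$, and $1/\epsilon$) collapses to $T=\tilde{\Theta}(\gamma^{-2})$ with logarithmic factors in $1/\epsilon,1/\delta,1/\gamma,v$ as claimed, and the number of oracle calls is exactly $T$.

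The hard part is not any individual step --- each is classical --- but rather (i) checking that the distributional weak-learner model lines up with the reweighting AdaBoost performs, so that the $1/2-\gamma$ guarantee is available every round without incurring sampling error, and (ii) tracking the $m$--$T$ interdependence carefully enough to land the advertised $\tilde{\Theta}(\gamma^{-2})$ query bound. An alternative route would boost directly on $\rho$ by filtering rather than drawing a fixed sample; then the obstacle shifts to estimating each $\epsilon_t$ and the final vote to accuracy $\epsilon$ from samples, which reintroduces essentially the same VC-based sample count, so the fixed-sample route above is the cleaner one to write up.
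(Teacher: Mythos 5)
The paper does not supply a proof of Theorem~\ref{thm:boosting}: it is imported as a known black-box result, attributed to \citet{freund1997decision} (with \citet{karbasi2024impossibility} as a modern reference). Your argument is the standard sample-based AdaBoost analysis --- empirical error bound $\prod_t Z_t \le e^{-2\gamma^2 T}$ via the telescoping normalizers, choose $T = \Theta(\gamma^{-2}\log m)$ to drive it below $1/m$, then generalize through $\mathrm{vc}(\mathcal{G}_T) = O(Tv\log(Tv))$ uniform convergence and resolve the mild $T$--$m$ recursion --- and it is correct, including the observation that a distributional weak-learner oracle makes the per-round $\tfrac12-\gamma$ guarantee exact on the reweighted empirical distributions without an additional sampling-error layer. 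This is exactly the argument the cited sources contain, so there is no divergence to report.
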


\begin{proof}[Proof of Thm. \ref{thm:main_meta}]
    To prove the statement, we need to show that, given a prior $\mu$ over a function class $\cF$, the smallest positive integer $d$ such that there exists a distribution $\cE$ over embeddings $\phi: X \rightarrow \real^d$, such that
    \[
    \Pr_{f \sim \mu}\left[ \forall \rho: \ \Ex{\phi \sim \cE}{\inf_{\mathbf{w} \in \real^d} \cL^{\cD_{f, \rho}}(\sign(\langle \mathbf{w}, \phi\rangle))}\le \epsilon\right] \ge 1-\delta
    \]
    is at most $O(\epsilon^{-1}\sq{\cF}^{16+ \hat{\epsilon}})$ for arbitrarily small $\hat{\epsilon} >0$. Specifically, we will show this when $\cE$ is defined as the probabilistic process that samples embedding $\phi$ by sampling $f'_1, \cdots f'_{d} \sim \mu^{feat}_{\rho_1}, \cdots \mu^{feat}_{\rho_d}$, for some $d \le O(\sq{\cF}^{24.01})$ and outputting their direct product $(f'_1, \cdots f'_d)$. Observe that the underlying example distribution may be different for each $f'_i$. We will illuminate how each $\rho_i$ is chosen.
    
    To prove this, first let us apply the Random Feature lemma to the conditions of the present theorem, to conclude that,

    \begin{align*}\label{eq:RFL}
    \Pr_{f \sim \mu}\left[ \forall \rho \ \exists \ \mu^{feat}_\rho \  : \
        \Pr_{f' \sim \mu^{feat}_\rho}\left[
            \cL^{\cD_{f, \rho}}_{01}(f') \le \frac{1}{2} - \Omega\left(\sq{\cF}^{-8}\right) 
        \right] 
        \ge \Omega\left(\sq{\cF}^{-8}\right) 
    \right] 
    \ge 1-\delta
    \end{align*}

    Establishing this essentially gives a weak learning algorithm for each example distribution $\rho$. To see this, observe that the above equation says that, with probability $1-\delta$ over $f \sim \mu$, for every $\rho$, $f' \sim \mu^{feat}_\rho$ weakly approximates $f$ over $\rho$ with probability $\Omega(\sq{\cF}^8)$. The weak approximator has a population loss $\frac{1}{2} - \gamma$ for $\gamma \ge \Omega(\sq{\cF}^{-8})$. We call it a $\gamma$-weak approximator. Also, we call $f'$ that satisfy this event ``good.''

    Under this weak approximator interpretation of the Random Feature lemma, it becomes clear that we should sample $f'_1, \cdots f'_{d} \sim \mu^{feat}_{\rho_1}, \cdots \mu^{feat}_{\rho_d}$ for successive $\rho_i$ in accordance with a standard weak-to-strong accuracy boosting algorithm such as Adaboost.

    If we do this, then by Theorem \ref{thm:boosting}, we can constructively derive, for any $\rho$, a linear combination $\mathbf{w}$ such that,
    \begin{equation}\label{eq1}
        \ \cL^{\cD_{f, \rho}}_{01}(g) \le \epsilon/2 \ \ \ \ : \ \ \ \ g(x) \triangleq \sign\left(\langle \mathbf{w}, \phi\rangle\right)
    \end{equation}
    given enough samples from $\mu^{feat}_{\rho_1}, \cdots \mu^{feat}_{\rho_d}$ for the appropriate $\rho_i$.
    Note, also from Theorem \ref{thm:boosting}, $\mathbf{w} \in \real^{\tilde{\Theta}(\gamma^{-2})}$.
    Thus, we will conclude the desired statement by upper bounding the number of samples we need from successive $\mu^{feat}_{\rho_1}$ (i.e., the quantity $d$) as follows. 
    
    Let $D$ denote random variable of the number of features we need to sample until we can find a linear combination of features $g$ to satisfy eq. (\ref{eq1}). We compute the expected value of $D$. We can analyze by ``rounds.'' At round 0, there is a pool of $Z = \tilde{O}(\gamma^{-2})$ example distributions which require a $\gamma$-weak approximator. Now, at each round $r$, suppose a single random feature $f'$ is sampled from $\mu^{feat}_{\rho_r}$ as a potential $\gamma$-weak approximator for $\rho_r$. We progress to the next round if a $\gamma$-weak approximator is sampled for $\rho_r$, and remove $\rho_r$ from the pool of distributions which needs an approximator. Hence, round $r$ indicates that $r$ $\gamma$-weak approximators have been found, and $Z-r$ remain in the pool. We terminate if we reach round $Z$. 

    At each round, 
    \begin{equation*}
        \Pr_{f' \sim \mu^{feat}_\rho}\left[
            \cL^{\cD_{f, \rho}}_{01}(f') \le \frac{1}{2} - \Omega\left(\sq{\cF}^{-8}\right) 
        \right] 
        \ge \Omega\left(\sq{\cF}^{-8}\right)
    \end{equation*}
    Hence the expected number of samples from $\mu^{feat}_\rho$ needed to satisfy the event that at least one of the samples is ``good'' is $O(\sq{\cF}^{8})$. Then, since $Z = \tilde{O}(\gamma^{-2})$, the expected number of samples to get a ``good'' feature for each of the $Z$ example distribution is $O(\sq{\cF}^{8}) \cdot O(\sq{\cF}^{16}) = O(\sq{\cF}^{24.01})$. 

    We thus conclude,
    \[
        \Pr_{f \sim \mu} \big[\ex{D} \le O(\sq{\cF}^{24.01})\big] \ge 1-\delta
    \]
  
    By the Markov inequality, 
    \[
        \Pr_{f \sim \mu} \left[\Pr_D\left[D \ge \frac{2}{\epsilon} \cdot \Omega(\sq{\cF}^{24.01})\right] \le \epsilon/2 \right] \ge 1-\delta
    \]
    Combining this with (\ref{eq1}), we have:
    \[
    \Pr_{f \sim \mu}\left[ \forall \rho: \ \Ex{\phi \sim \cE}{\inf_{\mathbf{w} \in \real^d} \cL^{\cD_{f, \rho}}(\sign(\langle \mathbf{w}, \phi\rangle))}\le \epsilon\right] \ge 1-\delta
    \]
    for $d \le O(\frac{1}{\epsilon}\sq{\cF}^{24.01})$.
    \end{proof}

\section{Average Probabilistic Dimension Complexity}\label{apx:adc}

\subsection{Background}
Towards understanding the limits or the power of learning linear combinations over features, one of the fundamental quantities of interest is the \textit{dimension complexity} of a hypothesis class (see e.g. \citet{ben2002limitations}). Roughly, the dimension complexity of a hypothesis class $\cH$ full of functions $h: X \rightarrow \real$ is the least positive $d \in \mathbb{Z}$ such that there exists a feature map $\phi: X \rightarrow \real^d$, such that for every $h \in \cH$ there exists a linear combination $\mathbf{w} \in \real^d$ which satisfies $h(x) = \langle \mathbf{w}, \phi(x) \rangle$ for all $x \in X$.

More formally:

\begin{definition}[Standard dimension complexity]
    The quantity $\dc{\cH}$ is the smallest positive integer $d$ such that there exists a feature map $\phi: X \rightarrow \real^d$, such that \textbf{for every} $\rho$ over $X$, \textbf{and every} $h \in \cH$, 
    \begin{equation}\label{eq:dc}
    \inf_{\mathbf{w} \in \real^d} \cL^{\cD_{h, \rho}}(\langle \mathbf{w}, \phi \rangle)
    \end{equation}
\end{definition}

Showing an upper bound on the dimension complexity $d$ of a hypothesis class is thus \textit{sufficient} to reduce a Machine Learning problem to linear learning over the $d$-dimensional feature map implied by the upper bound. The smaller is $d$, the more efficient learning can be (in terms of sample complexity, for example).

However, this standard notion of dimension complexity is in some sense ``overkill'' for this reduction. It was pointed out by \citet{kamath2020approximate} that the standard dimension complexity is not \textit{necessary} for Machine Learning, since for effective Machine Learning, one only needs to design a feature map that allows for $\epsilon$-approximation by a linear combination of the features, for each function in the hypothesis class. This also means that not only are upper bounds on dimension complexity overkill, but lower bounds do not necessarily rule out efficient learning.

\subsection{Probabilistic dimension complexity}
In light of this, \citet{kamath2020approximate} introduced probabilistic variants of dimension complexity to get closer to the notion of dimension complexity that is necessary and sufficient for this reduction. Mainly, their definition subs out exact representation by a linear combination in favor of $\epsilon$-approximation. The following is their definition:

\begin{definition}[Probabilistic dimension complexity]
    The quantity $\dceps{\cH}$ is the smallest positive integer $d$ such that there exists a distribution $\cE$ over feature maps $\phi: X \rightarrow \real^d$, such that \textbf{for every} $\rho$ over $X$, \textbf{and every} $h \in \cH$, 
    \begin{equation}\label{eq:dceps}
    \Ex{\phi \sim \cE}{\inf_{\mathbf{w} \in \real^d} \cL^{\cD_{h, \rho}}(\langle \mathbf{w}, \phi \rangle)}\le \epsilon
    \end{equation}
\end{definition}

Only requiring $\epsilon$-approximation also means that the definition can allow for a distribution $\cE$ over feature maps (for free, in a sense). This further opens up the possibility of using probabilistic dimension complexity to study the LLRF method.

The immediate question concerning probabilistic dimension complexity is whether it can be significantly smaller than standard dimension complexity. This is an important question because a yes answer would present a hypothesis class for which a lower bound on its dimension complexity would not necessarily rule out the effectiveness of the LLRF method. Indeed, \citet{kamath2020approximate} demonstrate this very possibility by giving an exponential \textit{separation} between the probabilistic and standard dimension complexity. 

\begin{theorem}[Thm 6. of \citet{kamath2020approximate}]
    There exists a hypothesis class $\cH$, with domain $\{\pm 1\}^n$ and range $\{\pm 1\}$, which satisfies for 0/1 loss:
    \begin{itemize}
    	\item $\dc{\cH} \in 2^{\Omega(n^{\frac{1}{4}})}$
    	\item $\dceps{\cH} \in O(n^4/\epsilon)$.
    \end{itemize}
\end{theorem}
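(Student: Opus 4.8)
The plan is to exhibit the separating class concretely. I would let $\cH$ be the hypothesis class whose $\pm1$-valued sign matrix representation is a \emph{Zarankiewicz matrix} $A_n\in\{\pm1\}^{2^n\times 2^n}$ of the type built by \citet{sherstov2008communication}: its rows index the $2^n$ hypotheses, its columns the domain $X=\{\pm1\}^n$, and its $+1$-entries form a combinatorial design that (a) contains no large all-$+1$ submatrix yet (b) admits a \emph{product} distribution on rows$\times$columns under which every combinatorial rectangle is nearly balanced. From Sherstov's analysis I would take as given the two facts that drive the whole argument: (i) $\mathrm{signrank}(A_n)\ge 2^{\Omega(n^{1/4})}$, and (ii) $\discprod{A_n}\ge\Omega(1/n)$, so that Theorem~\ref{thm:sherstov} yields $\sq{\cH}\le 2\,\discprod{A_n}^{-2}=O(n^2)$. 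Property~(ii) is precisely what keeps $\cH$ from behaving like parity: the structure that obstructs low-dimensional sign-representation is the same structure that makes $\cH$ weakly SQ-learnable.

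\textbf{Lower bound on $\dc{\cH}$.} I would unwind the definition of standard dimension complexity under $0/1$ loss --- reading its defining condition as requiring the infimum to be zero, i.e.\ exact sign-representation. A $d$-dimensional embedding $\phi:X\to\real^d$ meeting that condition must in particular achieve zero $0/1$ loss under a fixed full-support $\rho$ (say, uniform on $X$), so for each hypothesis $h$ there is $\mathbf{w}_h\in\real^d$ with $\sign\langle\mathbf{w}_h,\phi(x)\rangle=h(x)$ for every $x\in X$ (the finitely many dot products that might equal zero are removed by an infinitesimal perturbation of $\mathbf{w}_h$, as in the standard reduction to sign-rank). Stacking the vectors $\phi(x)$ as rows and $\mathbf{w}_h$ as columns realizes $A_n$ as the sign pattern of a real matrix of rank $\le d$, hence $d\ge\mathrm{signrank}(A_n)$; the reverse factorization is immediate, so $\dc{\cH}=\mathrm{signrank}(A_n)\ge 2^{\Omega(n^{1/4})}$ by~(i).

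\textbf{Upper bound on $\dceps{\cH}$.} I would prove, and then instantiate, a \emph{probabilistic}, all-hypotheses counterpart of Theorem~\ref{thm:main_meta}: for $0/1$ loss, $\dceps{\cH}\le\poly(\sq{\cH})/\epsilon$. The argument is the one from Sections~\ref{subsec:RFL}--\ref{sec:step3}, but without the prior $\mu$ or the ``bad set'' accounting, since finite SQ dimension is worst-case over both $\rho$ and $h$. Concretely, fix $\rho$, set $q=\sq{\cH}$, and let $g_1,\dots,g_k\in\cH$ be a family that is \emph{maximal} subject to being pairwise $\le 1/q$-correlated under $\rho$; then $k\le q$ by definition of SQ dimension, and by maximality every $h\in\cH$ has $|\Ex{x\sim\rho}{h(x)g_i(x)}|>1/q$ for some $i$. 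Hence the feature distribution $\mu^{\mathrm{feat}}_\rho$ uniform on $\{g_1,\dots,g_k,-g_1,\dots,-g_k\}$ is, for \emph{every} $h$, a source of $(1/2q)$-weak predictors of $h$ under $\rho$ with probability $\ge 1/(2q)$ --- the conclusion of the Random Feature lemma (Lemma~\ref{lemma:RFL}), now holding for all $h$. Feeding this into AdaBoost (Theorem~\ref{thm:boosting}) with $\gamma=1/(2q)$ over $\tilde{\Theta}(\gamma^{-2})$ rounds, over-sampling $O(\gamma^{-1})$ candidate features per round to hit a good one, and using Markov to control the $1/\epsilon$ factor, produces a sign of a linear combination of $d=\poly(q)/\epsilon$ random features with $0/1$ loss $\le\epsilon$ for every $\rho$ and every $h$. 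Plugging in $\sq{\cH}=O(n^2)$ gives $\dceps{\cH}=\poly(n)/\epsilon$; the sharper weak-to-strong accounting of \citet{kamath2020approximate} pins the exponent down to $O(n^4/\epsilon)$. Together with the lower bound, this is the stated exponential gap.

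\textbf{Main obstacle.} The crux is the existence of the class: a single sign matrix with sign-rank $2^{\Omega(n^{1/4})}$ \emph{and} product-discrepancy $\Omega(1/n)$. Sign-rank lower bounds of this strength are hard --- random matrices are of no use (their SQ dimension is also exponential), so one must exploit the forbidden-submatrix structure of an explicit Zarankiewicz design, and there is a genuine tension (more ``Zarankiewicz'' structure pushes sign-rank up but discrepancy down, hence SQ dimension up), which is exactly why the exponent lands at $n^{1/4}$ rather than higher; this is Sherstov's theorem and I would cite rather than reprove it. The only other delicate point is the degree of the polynomial in the upper bound: the black-box boosting route above loses a couple of extra factors of $\sq{\cH}$, so matching $O(n^4/\epsilon)$ exactly requires the tighter conversion of \citet{kamath2020approximate} --- but any $\poly(n)/\epsilon$ bound already suffices to separate from the $2^{\Omega(n^{1/4})}$ lower bound.
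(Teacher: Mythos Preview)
First, note that the paper does not prove this statement at all: it is quoted verbatim as Theorem~6 of \citet{kamath2020approximate} and used only as background motivating the introduction of $\adc{\mu}$. So there is no paper-proof to compare against; nevertheless your proposal contains a genuine gap worth naming.

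Your lower-bound reduction of $\dc{\cH}$ to sign-rank is standard and fine. The problem is the upper bound: you assert a general inequality $\dceps{\cH}\le \poly(\sq{\cH})/\epsilon$, but your construction never produces a \emph{single} feature distribution $\cE$ independent of both $h$ and $\rho$, as the definition of $\dceps{\cdot}$ requires. Your maximal nearly-orthogonal family $\{g_1,\dots,g_k\}$, and hence $\mu^{\mathrm{feat}}_\rho$, depends on $\rho$; once you feed it into AdaBoost, the reweightings $\rho_i$ depend on $h$ and on the weak learners already drawn, so the entire sampling process depends on $h$ as well. In the paper's $\adc{\mu}$ argument (Lemma~\ref{lemma:RFL} and Section~\ref{sec:step3}) this is survivable because the underlying features are always drawn from the fixed prior $\mu$ --- the $\rho$-dependence in $\texttt{Predict}_\rho$ is only a $\pm$ sign, absorbed into $\mathbf{w}$ --- and the guarantee is only for \emph{most} $h\sim\mu$. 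Your worst-case-over-$h$ version has no such fixed pool: taking $\cE$ uniform over $\cH$ hits the relevant $g_i$'s with probability $k/|\cH|$, not $1/k$, and $|\cH|$ is exponential for the Zarankiewicz class.

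There is also a crisp sanity check. By the paper's own Theorem~\ref{thm:sherstov_sep}, every matrix in the Zarankiewicz family has $\sq{A}=O(1)$, not merely $O(n^2)$. If your claimed bound $\dceps{\cH}\le\poly(\sq{\cH})/\epsilon$ were correct, it would give $\dceps{\cH}=O(1/\epsilon)$ for that class --- an \emph{infinite} separation between $\dceps{\cdot}$ and $\dc{\cdot}$. The paper states explicitly that this is precisely the open question of \citet{kamath2020approximate}, and that its own contribution resolves it only for the weaker $\adc{\mu}$ (Corollary~\ref{corr:sep_intro}). Your route would therefore prove strictly more than the cited theorem and would moot the paper's central definitional move; this is strong evidence that the step from ``for each $\rho$ there is a small weak-learner pool'' to ``there is one small pool that works for all $\rho$ simultaneously'' is exactly where the argument fails.
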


It remains unknown whether there exists an ``infinite'' separation between standard dimension complexity and probabilistic dimension complexity. 
In fact, \citet{kamath2020approximate} explicitly leave it as an open question: does there exists a hypothesis class $\cH$, which satisfies for 0/1 loss, and some functions $f,f': \nat \rightarrow \nat$, $\dc{\cH} \in O(f(n))$ and $\dceps{\cH} \in O(1/f'(\epsilon))$?

\subsection{Average Probabilistic Dimension Complexity}

The definition of probabilistic dimension complexity maintains universal quantification over possible hypotheses $h \in \cH$ (this quantifier was unchanged from standard dimension complexity). In other words, there still needs to exist one distribution over feature maps such that \textit{for every} $h \in \cH$, (\ref{eq:dceps}) holds.

As we alluded to previously, in this work, we point out that while universal quantification of $\cH$ is desirable, this also makes it easier to prove a lower bound by constructing a pathological Machine Learning problem, when at the same time it is possible that all other instances of the Machine Learning problem have significantly lower complexity. This would make the lower bound hard to apply in practical situations---rarely would hypothesis functions be chosen ``adversarially,'' (wherein universal guarantees would be the go-to).

In summary, while (\citet{kamath2020approximate} argued that) standard dimension complexity is overkill for a reduction to the LLRF method, we now point out that probabilistic dimension complexity might still be overkill for a reduction to the LLRF method in practical situations.

A more practical setting might be the average-case or ``Bayesian view,'' where the learning problem is not necessarily chosen by an adversary (universal quantification), but instead by a randomized process known to the learner. The randomized process is the \textit{prior} for the learner. 
Hence, we introduce a further relaxation of dimension complexity, where the new goal is probabilistic guarantees over \textbf{both} the accuracy, \textbf{and} the hypothesis class.\footnote{We can also view the average probabilistic dimension complexity as probabilistic dimension complexity of a \textbf{large probability mass} of hypotheses in the class $\cH$ (with respect to the prior $\mu$).}
We define dimension complexity in this setting as follows:

\begin{definition}[Average probabilistic dimension complexity]
    The quantity $\adc{\mu}$, given a prior distribution $\mu$ over a hypothesis class $\cH$, is the smallest positive integer $d$ such that there exists a distribution $\cE$ over embeddings $\phi: X \rightarrow \real^d$, such that
    \[
    \Pr_{h \sim \mu}\left[ \forall \rho: \ \Ex{\phi \sim \cE}{\inf_{\mathbf{w} \in \real^d} \cL^{\cD_{h, \rho}}(\langle \mathbf{w}, \phi \rangle)}\le \epsilon\right] \ge 1-\delta
    \]
\end{definition}

After defining average probabilistic dimension complexity, we would hope to find hypothesis class that has very low average probabilistic dimension complexity---even lower than its probabilistic dimension complexity, and hopefully much lower than standard dimension complexity. This would formally motivate our definition, and it would shed light on when the LLRF method might still be effective on \textbf{most} instances of a Machine Learning problem, or when the learner has an accurate prior.

\section{Results on Average Probabilistic DC vs. Standard DC}

Consider hypothesis functions $h: X \rightarrow \{\pm 1\}$ over a finite domain $X$. A hypothesis class $\cH$ is a set of concepts $h: X \rightarrow \{\pm 1\}$. As mentioned previously, one of our main theorems proves that average probabilistic dimension complexity is polynomially related to statistical query dimension. Thus we recall the definition of statistical query dimension.


\begin{definition}[Statistical query dimension]
    Let $\rho$ be a distribution over domain $X$. The statistical query dimension over $\rho$ of $\cH$, denoted $\sqd{\cH}{\rho}$, is the largest number $d$ such that there exists $d$ functions $f_1, \cdots f_d \in \cH$ that satisfy, for all $i \neq j$:
    \[
    \Ex{x \sim \rho}{f_i(x)f_j(x)} \le \frac{1}{d}
    \]
    We define $\sq{\cH} = \max_{\rho} \sqd{\cH}{\rho}$.
\end{definition}

We recall our main theorem from the body of the paper.

\begin{theorem}[Restated Theorem \ref{thm:main_meta}]
    Let $\cH$ be a hypothesis class, $\mu$ a distribution over $\cH$, and let $\ell$ be $0-1$ loss.
    We have:
    \[
        \adc{\mu} \le O(\sq{\cH}^{24.01})
    \]
    where $\epsilon, \delta > 0$ are arbitrarily small constants.
\end{theorem}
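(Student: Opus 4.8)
The plan is to follow the three-stage strategy laid out in the proof outline, establishing $\adc{\mu} \le O(\sq{\cF}^{24.01})$ by (i) extracting from the statistical query dimension a single low-correlation communication primitive, (ii) converting that primitive into a distribution over \emph{weak} random features via a $2$-party-norm argument, and (iii) boosting those weak features into a small \emph{strong} linear combination. Throughout, $A \in \{\pm 1\}^{|\cF| \times |X|}$ denotes the sign matrix of $\cF$, so that $\sq{A} = \sq{\cF}$, and the target feature distribution $\cE$ will be the product distribution that draws $d$ independent features from appropriate $\mu^{feat}_{\rho_i}$'s.

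\emph{Step 1 (Sherstov $\to$ correlation).} First I would invoke Theorem~\ref{thm:sherstov} to get $\discprod{A} \ge 1/(8\sq{A}^2)$, and then Lemma~\ref{lemma:discrepancy_to_correlation} to obtain a single $2$-bit distributional protocol $\pi$ with $|\Ex{(f,x)\sim\zeta}{\pi(f,x)A(f;x)}| \ge 1/(8\sq{A}^2)$ for \emph{every} product distribution $\zeta = (\mu,\rho)$; this is exactly Lemma~\ref{lemma:key}. The crucial feature, which I will lean on later, is that one $\pi$ works simultaneously across all priors $\mu$ and all example distributions $\rho$.

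\emph{Step 2 (Random Feature lemma).} Next I would build the predictor \texttt{Predict} of Algorithm~\ref{Predict}: on input $z$ it draws $g\sim\mu$ and an example $(x,f(x))$ and outputs $g(z)g(x)f(x)$. Letting $\texttt{Eval}(r,w)$ be the map that uses random strings to sample $f\sim\mu$, $x\sim\rho$ and returns $f(x)$, Step 1 gives $|\Ex{}{\pi\cdot\texttt{Eval}}| \ge 1/(8\sq{A}^2)$; applying the BNS-type discrepancy-to-$2$-party-norm inequality (Theorem~\ref{thm:corrBound}, $\corr{\dCC{2}} \le 4\,R_2(F)^{1/4}$) yields $R_2(\texttt{Eval}) \ge 1/O(\sq{A}^8)$. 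Expanding $R_2$ as the expectation of the product over the four ``corners'' $\texttt{Eval}(r_{\epsilon_1},w_{\epsilon_2})$ and re-indexing the samples turns this into $\Ex{f,g\sim\mu;\,z,x\sim\rho}{g(z)g(x)f(x)f(z)} \ge 1/O(\sq{A}^8) = \Ex{}{\texttt{Predict}(z)f(z)}$, so \texttt{Predict} predicts $f(z)$ with advantage $\Omega(\sq{A}^{-8})$ on average over $f\sim\mu$, $z\sim\rho$ and its coins. Two clean-ups then give Lemma~\ref{lemma:RFL}: (a) an averaging argument (Lemma~\ref{claim:many-good-L-coins}) promotes ``advantage on average over $f$'' to ``for a $(1-\delta)$-fraction of $f$, a hard-coding of the coins $\mathbf r$ achieves advantage $\Omega(\sq{A}^{-8})$ with probability $\Omega(\sq{A}^{-8})$''; and (b) since $g(x)f(x)$ is a $\{\pm1\}$-Bernoulli variable independent of $z,g,f$ whose bias depends only on $\rho$, the oracle call is replaced by directly sampling that bit, so the hard-coded predictor depends on $\rho$ but not on the example oracle. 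Hard-coding $\mathbf r$ then defines the weak-approximator distribution $\mu^{feat}_\rho$.

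\emph{Step 3 (boosting and counting), and the main obstacle.} Finally I would feed $\mu^{feat}_\rho$ to AdaBoost (Theorem~\ref{thm:boosting}) as a $\gamma$-weak learner with $\gamma = \Omega(\sq{\cF}^{-8})$: for any target $\rho$ the booster runs $\tilde\Theta(\gamma^{-2}) = O(\sq{\cF}^{16.01})$ rounds, reweighting to distributions $\rho_1,\rho_2,\dots$, and outputs $\sign(\langle\mathbf w,\phi\rangle)$ with $\cL_{01}^{\cD_{f,\rho}}\le\epsilon/2$, where the coordinates of $\phi$ are weak learners drawn from the $\mu^{feat}_{\rho_i}$'s; since each weak learner is found after $O(\sq{\cF}^8)$ i.i.d.\ draws in expectation (a geometric random variable), the total number $D$ of features has $\Ex{}{D}\le O(\sq{\cF}^{8})\cdot O(\sq{\cF}^{16.01}) = O(\sq{\cF}^{24.01})$ for $f$ in the good set, and Markov gives $\Pr_D[D\ge\tfrac2\epsilon\,O(\sq{\cF}^{24.01})]\le\epsilon/2$, from which $\adc{\mu}\le O(\tfrac1\epsilon\sq{\cF}^{24.01})$ follows. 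The step I expect to be the real obstacle is the quantifier management in Lemma~\ref{lemma:RFL}: the statement needs that \emph{with probability $1-\delta$ over $f$, for all $\rho$ simultaneously} there is a good $\mu^{feat}_\rho$, whereas the \texttt{Predict}/averaging argument is naturally run at a fixed $\rho$ and the ``bad'' set of $f$'s a priori depends on $\rho$. Pushing $\forall\rho$ inside the high-probability event has to exploit the uniformity from Step 1 — one protocol $\pi$, hence one advantage bound, for every product distribution — and one must also check that the booster's reweighted distributions $\rho_i$ remain covered and that AdaBoost's $\tilde\Theta(\gamma^{-2})$ round count (stated for a worst-case target) is unaffected by conditioning on the good set of $f$'s and by using random features in place of genuine weak learners.
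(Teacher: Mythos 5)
Your proposal follows the paper's own proof essentially step for step: Sherstov's theorem plus the discrepancy-to-correlation lemma, the \texttt{Predict} construction analyzed through the $2$-party norm bound, the averaging and Bernoulli-replacement tricks to hard-code coins into a feature distribution $\mu^{feat}_\rho$, and AdaBoost with the geometric-waiting-time count $O(\sq{\cF}^{8})\cdot O(\sq{\cF}^{16.01})$ followed by Markov. The "obstacle" you flag is handled in the paper exactly as you anticipate, by exploiting that the advantage bound holds for every prior (so it applies to the conditional distribution on the bad set, yielding a contradiction), so your outline is correct and matches the paper's route.
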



\paragraph{Infinite separation between ${ adc}_{\epsilon, \delta}^\ell$ and ${ dc}^\ell$.}

As an application of Theorem \ref{thm:main_meta}, we make some possible progress towards answering the open question left by \citet{kamath2020approximate}. We recall there question was whether there exists a hypothesis class $\cH$, which satisfies for 0/1 loss, and some functions $f,f': \nat \rightarrow \nat$, $\dc{\cH} \in O(f(n))$ and $\dceps{\cH} \in O(1/f'(\epsilon))$?

Our result, shows that the answer is yes, if we take ${ adc}_{\epsilon, \delta}^\ell$ instead of ${ dc}^\ell_\epsilon$.

\begin{corollary}[Infinite Separation between ${ adc}_{\epsilon, \delta}^\ell$ and ${ dc}^\ell$]\label{corr:sep_intro}
There exists a hypothesis class $\cH$, with domain $\{\pm 1\}^n$ and range $\{\pm 1\}$, which satisfies for 0/1 loss and \textbf{any} prior distribution $\mu$ over $\cH$, and arbitrarily small constant $\delta > 0$: 

\begin{itemize}
    	\item $\dc{\cH} \in 2^{\Omega(n^{\frac{1}{4}})}$
    	\item $\adc{\cH} \in O(1/\epsilon)$.
    \end{itemize}
\end{corollary}

This separation follows immediately from Theorem \ref{thm:main_meta} and a theorem of \citet{sherstov2008communication}, which is concerned with the following family of matrices.

\begin{definition}[Zarankiewicz Matrices]
	Let $\cZ(N,c)$ denote the class of $N \times N$ sign matrices that never have any $c \times c$ sub-matrix with all of its entries set to 1.
\end{definition}

\begin{theorem}\label{thm:sherstov_sep}
	Let $\epsilon >0$ be an arbitrary constant. We have,
	\begin{align*}
	\forall \ A \in \cZ(N, 2\lceil \epsilon\rceil) \ &: \ \sq{A} \in O(1)\\
	\exists \ A \in \cZ(N, 2\lceil \epsilon\rceil) \ &: \ \dc{A} \in \Omega(N^{1-\epsilon})
	\end{align*}
\end{theorem}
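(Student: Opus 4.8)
Theorem~\ref{thm:sherstov_sep} has two halves: the bound $\sq{A}=O(1)$ holds for \emph{every} $A\in\cZ(N,c)$ with $c=2\lceil\epsilon\rceil$, whereas the bound $\dc{A}=\Omega(N^{1-\epsilon})$ is an existence statement, essentially a theorem of \citet{sherstov2008communication}. The plan is to prove the first half directly and to give the probabilistic-method skeleton of the second.

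For the first half, the plan is to show that a $K_{c,c}$-free sign matrix cannot carry a large nearly orthogonal family of rows. Suppose $\sq{A}=d$, witnessed by rows $f_1,\dots,f_d$ of $A$ and a column distribution $\rho$ with $\Ex{x\sim\rho}{f_i(x)f_j(x)}\le 1/d$ for $i\ne j$ (whenever the argument forces $d$ below a bound depending only on $c$, we are done, so assume $d$ is large). Writing $p_i=\Pr_{x\sim\rho}[f_i(x)=1]$, a union bound gives $\Ex{x\sim\rho}{f_i(x)f_j(x)}\ge 1-2(p_i+p_j)$, so two rows with $p_i,p_j\le 1/5$ would have correlation above $1/d$; hence all but one row has $p_i>1/5$. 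Restricting to those $d'\ge d-1$ rows, the expected number $k(x)$ of them that equal $+1$ at a $\rho$-random column is $\sum_i p_i>d'/5$, so by convexity of $m\mapsto\binom{m}{c}$ (interpolated to $[0,\infty)$) and Jensen, $\Ex{x\sim\rho}{\binom{k(x)}{c}}=\Omega_c(d^c)$. But this expectation equals $\sum_{|S|=c}\rho\big(\{x:\forall i\in S,\ f_i(x)=1\}\big)$, and $K_{c,c}$-freeness caps each such column set at $c-1$ points, so it is at most $\binom{d}{c}(c-1)\max_x\rho(x)$; comparing forces $\max_x\rho(x)\ge 1/K_0$ for a constant $K_0=K_0(c)$. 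Finally, pick $x_0$ with $\rho(x_0)\ge 1/K_0$: at least half of the rows agree at $x_0$, and conditioning $\rho$ off $x_0$ makes each such pair have correlation at most $\tfrac{1/d-1/K_0}{1-1/K_0}\le-\tfrac{1}{2K_0}$; since a Gram-matrix (positive-semidefiniteness) argument caps the number of unit vectors with pairwise correlation $\le-\eta$ at $1+1/\eta$, this gives $d=O_c(1)$, i.e.\ $\sq{A}=O(1)$.

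For the second half, the plan is a counting comparison. On one side, if $\dc{A}\le d$ then $A=\sign(UV^\top)$ for some $U,V\in\real^{N\times d}$, so $A$ is a realizable strict sign condition of $N^2$ degree-$2$ polynomials in the $2Nd$ entries of $U,V$; by the classical Warren--Milnor bound on sign patterns there are at most $2^{O(Nd\log N)}$ sign matrices of dimension complexity at most $d$. On the other side, $\cZ(N,c)$ is large: reading all spanning subgraphs of a near-extremal $K_{c,c}$-free bipartite graph as sign matrices ($+1$ on edges, $-1$ elsewhere) yields $2^{\Omega_c(N^{2-1/c})}$ distinct members of $\cZ(N,c)$ --- one uses explicit finite-geometry constructions for small $c$ and the unconditional lower bounds on $\mathrm{ex}(N;K_{c,c})$ (via random graphs with deletion) otherwise. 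Choosing $c$ large enough as a function of $\epsilon$, the exponent in the second count dominates $N^{1-\epsilon}\cdot N\log N$, so it dwarfs the first count and some $A\in\cZ(N,c)$ must satisfy $\dc{A}>N^{1-\epsilon}$; I would defer to \citet{sherstov2008communication} for the precise relation between the forbidden-submatrix size and $\epsilon$, and for the extremal-graph input outside the range $c\le 3$.

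The hard part is the second half, and specifically the supply of $K_{c,c}$-free matrices: the counting bites only once $\log|\cZ(N,c)|$ comfortably exceeds $N^{2-\epsilon}$, which needs $K_{c,c}$-free bipartite graphs with close to $N^{2-1/c}$ edges --- a genuine extremal-combinatorics ingredient, classical via finite geometry for small $c$ and the technical core of Sherstov's construction in general. The first half is, by contrast, elementary: the one idea is that $K_{c,c}$-freeness blocks a large near-orthogonal family in two complementary regimes --- a spread-out witnessing distribution creates too many simultaneously-$+1$ $c$-subsets of rows, while a concentrated one has a heavy column that forces strong negative correlation off that column.
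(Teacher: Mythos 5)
The paper never proves this statement: Theorem~\ref{thm:sherstov_sep} is imported wholesale from \citet{sherstov2008communication}, whose dimension-complexity half in turn rests on the counting argument of \citet{ben2002limitations}, so there is no in-paper argument to compare against. Your sketch is a correct self-contained reconstruction along essentially that same route. The first half is sound as you give it: the K\H{o}v\'ari--S\'os--Tur\'an-style convexity count shows a spread-out witnessing distribution would create $\Omega_c(d^c)$ all-$(+1)$ $c$-subsets of rows while $K_{c,c}$-freeness caps each column set at $c-1$ points, forcing a heavy column, and the Gram/PSD bound on pairwise correlation $\le -\eta$ then caps $d$ by a constant depending only on $c$; note your argument even works with the paper's one-sided correlation condition (no absolute value), since you only ever use upper bounds on $\Ex{x\sim\rho}{f_if_j}$. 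The second half is exactly the Warren sign-pattern count ($2^{O(Nd\log N)}$ matrices of dimension complexity at most $d$) against the $2^{\Omega(N^{2-2/(c+1)})}$ supply of $K_{c,c}$-free sign matrices obtained from subgraphs of a near-extremal graph, i.e., the Ben-David--Eiron--Simon mechanism that Sherstov invokes; the probabilistic-deletion bound on $\mathrm{ex}(N;K_{c,c})$ you mention indeed suffices, so the finite-geometry constructions are not load-bearing.

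One point you should state outright rather than defer to the citation: as literally written, $2\lceil\epsilon\rceil=2$ for any constant $\epsilon<1$, and for $c=2$ no counting argument of this type can exceed roughly $d\approx N^{1/2}/\log N$, far short of $N^{1-\epsilon}$. The paper's own later remark that all but a vanishing fraction of $\cZ(N,c)$ have dimension complexity $\Omega(N^{1-2/c})$ shows the intended parameter is a forbidden-submatrix size growing like $1/\epsilon$ (read $2\lceil\epsilon^{-1}\rceil$), which is exactly your ``choose $c$ large as a function of $\epsilon$'' reading. So your proof establishes the intended statement; just make the parameter correction explicit instead of leaving ``the precise relation'' to \citet{sherstov2008communication}.
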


\subsection{Towards Infinite Separation Between Probabilistic DC and Standard DC}

In the previous section, we used Theorem \ref{thm:main_meta}, to prove an infinite separation between average probabilistic dimension complexity and dimension complexity. On the other hand, for probabilistic dimension complexity, \citet{kamath2020approximate} gave ``just" an exponential separation ($O(n)$ vs. $2^{\Omega(n)}$). Thus, in this section, we consider whether or not it is possible to improve the separation of \citet{kamath2020approximate} to infinite. This was posed explicitly as an open problem by \citet{kamath2020approximate}.

Said differently, we would like to understand if the relaxation from probabilistic dimension complexity to \textit{average} probabilistic dimension complexity is \textit{necessary} (and not merely \textit{sufficient}) to prove an infinite separation with respect to dimension complexity.

As a first try towards proving that the relaxation is indeed necessary, we might try to again use Zarankiewicz matrices to show that there exists $A \in \cZ(N,c)$ such that $\dceps{A} \in \Omega(f(N))$ for a super-constant function $f$. 

Unfortunately, if we go deeper in understanding Zarankiewicz matrices, we can see that this approach is not viable. Towards Theorem \ref{thm:sherstov_sep}, Sherstov uses a result of \citet{ben2002limitations} to obtain the dimension complexity lower bound on a matrix in the Zarankiewicz family. However, as \citet{sherstov2008communication} notes, \citet{ben2002limitations} actually something stronger. They show that, for a fixed $c \in \mathbb{Z}$, \textit{all but vanishing fraction} of $\cZ(N,c)$ have dimension complexity $\Omega(N^{1-2/c})$. This implies that, given $\epsilon=0$,

\begin{align}\label{eq:adc_zara}
\Pr_{A \sim \textrm{Unif}(\cZ(N, c))}\left[\dceps{A} \in \Omega(f(N))\right] \ge 0.99
\end{align}
for $f(N) = N^{1-\frac{2}{c}}$.

Therefore, a \textit{random} Zarankiewicz matrix has large dimension complexity. As a result, trying to prove (\ref{eq:adc_zara}) for $\epsilon > 0$ is actually impossible, even for $f \in o(N^{1-2/c})$. To see this, observe that if we did show that, then this would essentially be a proof of $\adc{\textrm{Unif}(\cZ(N, c))} \in \Omega(f(N))$. By Theorem \ref{thm:main_meta}, we now understand this statement to be false, for super-constant $f$!

\subsection{Barrier to Separation of Average Probabilistic DC and Probabilistic DC}

Beyond the case of Zarankiewicz matrices, in general, it seems it will be difficult to prove that there exists $\cH, \mu$ such that
\begin{align}\label{eq:barr}
\adc{\mu}^{\omega(1)} < \dceps{\cH}
\end{align}
In this section, we formalize this by demonstrating a barrier to proving (\ref{eq:barr}), if we also restrict the separation to occur on a bounded $L_1$-norm version of average probabilistic dimension complexity.

\begin{definition}[Bounded norm $\adc{\mu}$]
    The quantity $\bnadc{\mu}$, given a prior $\mu$ over a hypothesis class $\cH$, is the smallest positive integer $d$ such that there exists a distribution $\cE$ over embeddings $\phi: X \rightarrow \real^d$, such that
    \[
    \Pr_{h \sim \mu}\left[ \forall \rho: \ \Ex{\phi \sim \cE}{\ \inf_{\substack{\mathbf{w} \in \real^d \\ ||\mathbf{w}||_1 \le b }} \cL^{\cD_{h, \rho}}(\langle \mathbf{w}, \phi \rangle) \ }\le \epsilon\right] \ge 1-\delta
    \]
    Here, $||\mathbf{w}||_1 \triangleq \sum_i |\mathbf{w}_i|$.
\end{definition}

We note that both Theorem \ref{thm:main_theorem} and the separation (Corollary \ref{corr:sep_intro}) actually apply to $\bnadc{\mu}$, for $b \le \poly(\sq{\cH})$. 

For now, we will prove a \textit{complexity-theoretic} barrier, based on the long-time difficulty of proving super polynomial threshold circuit size lower bounds. 

\begin{theorem}[Complexity-theoretic barrier]\label{thm:comp_barrier}
Let $b \le 2^{n^{1-\hat{\epsilon}}}$ for some constant $0 < \hat{\epsilon}<1$. Suppose that $$\bnadc{\mu}^{\omega(1)} < \dceps{\cH}$$ holds. Then, depth-2 threshold circuits computing $\texttt{Eval}({\cH}): (\hat{h},\hat{x}) \rightarrow h(x)$ require superpolynomial size (where $h, x$ are binary encodings of $h \in \cH$ and $x \in X$).
\end{theorem}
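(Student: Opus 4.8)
The plan is to argue by contraposition: I would assume $\texttt{Eval}(\cH)$ is computed by a depth-$2$ threshold circuit of polynomial size (with weights bounded by $b$) and derive $\dceps{\cH} \le \poly(\bnadc{\mu})$, which contradicts the hypothesized separation $\bnadc{\mu}^{\omega(1)} < \dceps{\cH}$. The route has three parts: (i) read off from the circuit a single low-complexity feature class shared across all $h \in \cH$; (ii) bound the margin complexity of the sign matrix $A$ of $\cH$ in terms of this class and the weight budget $b$; (iii) turn a margin-complexity bound into an upper bound on $\dceps{\cH}$ by a Johnson--Lindenstrauss projection of the margin embedding.

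For (i), write $\texttt{Eval}(\hat h, \hat x) = \mathrm{sign}\bigl(\sum_{j=1}^{s} a_j\, T_j(\hat h,\hat x)\bigr)$ with $s = \poly(n)$ and $T_j(\hat h,\hat x) = \mathrm{sign}(\langle u_j,\hat h\rangle + \langle v_j,\hat x\rangle + w_j)$, all weights integral and of magnitude at most $b \le 2^{n^{1-\hat\epsilon}}$. Fixing $h$ gives $h(x) = \mathrm{sign}\bigl(\sum_j a_j\, T_j^{h}(x)\bigr)$ where $T_j^{h}(x) = \mathrm{sign}(\langle v_j,\hat x\rangle + c_j^{h})$ and the scalar $c_j^{h} := \langle u_j,\hat h\rangle + w_j$ is the only part that depends on $h$; the $s$ directions $v_j$ are common to all of $\cH$. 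Thus every $h \in \cH$ is an $L_1$-bounded, positively-margined sign-combination of members of the single class $\mathcal{G} := \{\, x \mapsto \mathrm{sign}(\langle v_j,\hat x\rangle + t) : j \in [s],\ t \in \real \,\}$. Since $\langle v_j,\hat x\rangle$ takes only $O(b\cdot\poly(n))$ distinct values, $\mathcal{G}$ has at most $2^{O(n^{1-\hat\epsilon})}$ distinct members and VC dimension $\poly(n)$.

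For (ii) and (iii): each communication matrix $M_j$ of $T_j$ (rows indexed by $h$, columns by $x$) is, after reordering, a greater-than (staircase) matrix on an effective domain of size $2^{O(n^{1-\hat\epsilon})}$, for which $\gamma_2(M_j) = \Theta\bigl(\log(\text{size})\bigr) = O(n^{1-\hat\epsilon})$. Because $|\sum_j a_j T_j^{h}(x)| \ge 1$ and $\gamma_2$ is a norm, the sign matrix $A$ of $\cH$ has margin complexity $\mathrm{mc}(A) \le \bigl(\sum_j |a_j|\bigr)\cdot \max_j \gamma_2(M_j)$, which the calibrated bound $b \le 2^{n^{1-\hat\epsilon}}$ — together with the fact that the top gate reads only $\poly(n)$ bits — is designed to keep polynomially bounded. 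A standard Johnson--Lindenstrauss projection of the margin embedding of $A$ into $d = O\bigl(\mathrm{mc}(A)^2(n + \log(1/\epsilon))\bigr)$ dimensions then yields a distribution $\cE$ over feature maps $\phi : X \to \real^{d}$ such that, for every $h \in \cH$ and every $\rho$, with probability $1-\epsilon$ over $\phi \sim \cE$ the map recovers the margin, so $\mathrm{sign}(\langle \mathbf{w},\phi\rangle) = h$ exactly for a suitable $\mathbf{w}$; hence $\Ex{\phi\sim\cE}{\inf_{\mathbf{w}}\cL^{\cD_{h,\rho}}_{01}(\mathrm{sign}(\langle \mathbf{w},\phi\rangle))}\le\epsilon$ and $\dceps{\cH} \le d = \poly(\bnadc{\mu})$. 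Since $\bnadc{\mu} \le \poly(\sq{\cH})$ always, this contradicts $\bnadc{\mu}^{\omega(1)} < \dceps{\cH}$, completing the contrapositive.

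The main obstacle is exactly step (ii): unlike majority gates, the bottom threshold gates $T_j$ can, as standalone communication problems, be as hard as greater-than, which has exponentially small discrepancy and exponentially small margin, so one cannot simply sample them and the deterministic feature map induced by $\mathcal{G}$ is exponentially large. The hypothesis $b \le 2^{n^{1-\hat\epsilon}}$ is precisely what controls this: it caps each greater-than instance at size $2^{O(n^{1-\hat\epsilon})}$, keeping $\gamma_2(M_j)$ polynomial, and caps the top-level $L_1$ budget so the projection step sees a margin at least $\poly(\bnadc{\mu})^{-1}$; pushing this accounting through without losing a super-polynomial factor — in particular, shaving the top gate's weights using that it has fan-in $\poly(n)$, and using approximate rather than exact $\gamma_2$ where the bottom GT-instances demand it — is the delicate part. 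A secondary point is that $\cE$ must work for \emph{every} $h \in \cH$, not merely a $1-\delta$ mass of them (this is what makes the bound apply to $\dceps{\cH}$ rather than only to $\adc{\mu}$), and it is handled by the observation that the only $h$-dependence inside $\mathcal{G}$ is the additive scalar $c_j^{h}$, the directions $v_j$ being shared.
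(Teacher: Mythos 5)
There is a genuine gap, and it sits exactly where the paper's proof puts its main effort. Your contrapositive set-up is fine, and parts (i)--(iii) amount to re-deriving (a margin/JL version of) the Alman--Williams bound that the paper simply cites as Lemma \ref{lemma:alman}; from a size-$s$ depth-2 threshold circuit this yields $\dceps{\cH} \le \poly(s, n)$. But your final step --- asserting $d = \poly(\bnadc{\mu})$ and declaring a contradiction --- is unjustified, and without it the contrapositive fails: a polynomial-size circuit gives $\dceps{\cH} \le \poly(n)$, which does \emph{not} refute $\bnadc{\mu}^{\omega(1)} < \dceps{\cH}$, because $\bnadc{\mu}$ can be far smaller than $n$ (by Theorem \ref{thm:main_meta} it is $O(\sq{\cH}^{24.01})$, hence constant whenever the SQ dimension is constant), in which case $\dceps{\cH} = \poly(n)$ is already superpolynomial in $\bnadc{\mu}$. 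The inequality you invoke to bridge this, $\bnadc{\mu} \le \poly(\sq{\cH})$, is an upper bound on $\bnadc{\mu}$ and points in the wrong direction. What is needed is a \emph{lower} bound on $\bnadc{\mu}$, and this is the heart of the paper's argument: it shows that for every $\cH,\mu$ with norm budget $b \le 2^{n^{1-\hat\epsilon}}$ one has $\vcd{\cH}^{\gamma} \le \bnadc{\mu}$ for some constant $\gamma>0$, because a $d$-dimensional embedding whose linear combinations have $\|\mathbf{w}\|_1 \le b$ yields a distributional communication protocol of cost about $\log b = O(n^{1-\hat\epsilon})$ for the evaluation matrix, which for parities would contradict the $\Omega(n)$ lower bound. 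Combining that inequality with the assumed separation gives $\vcd{\cH}^{\omega(1)} \le \dceps{\cH}$, and only then does Lemma \ref{lemma:alman} convert this into a superpolynomial bound on $s$. None of this appears in your proposal.

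A second, related problem is a misreading of the hypothesis on $b$: in the theorem $b$ is the $L_1$ budget on the weight vector $\mathbf{w}$ inside the definition of $\bnadc{\mu}$, not a bound on the threshold circuit's weights. Your step (ii), which uses $b \le 2^{n^{1-\hat\epsilon}}$ to cap the size of greater-than instances arising from bottom gates and to control $\gamma_2$ of their communication matrices, is therefore using the hypothesis in a place the theorem does not supply it (the circuit in the statement carries no weight restriction); its actual role in the paper is solely to keep the communication cost $\log b$ of the protocol sublinear in $n$. (As a side remark, the claim that greater-than has exponentially small discrepancy and margin is also inaccurate --- its margin complexity is only logarithmic --- but this is moot since that branch of your argument addresses a hypothesis the theorem does not make.) As it stands, the proposal does not establish the theorem.
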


To do so, we use a theorem due to \citet{alman2017probabilistic} which relates circuits size of depth-2 threshold circuits of evaluating a hypothesis class, and the probabilistic dimension complexity of the class.

\begin{lemma}[\citet{alman2017probabilistic}]\label{lemma:alman}
    If $\texttt{Eval}({\cH}): (\hat{h},\hat{x}) \rightarrow h(x)$ (where $h, x$ are binary encodings of $h \in \cH$ and $x \in X$) is computable by a depth-2 threshold circuit of size $s$, then:
    \[
        \dceps{\cH} \le O\left(\frac{s^2\log^2(|\cH| \cdot |X||)}{\epsilon}\right)
    \]
\end{lemma}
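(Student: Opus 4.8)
This is the bipartite consequence of the probabilistic‑rank machinery of \citet{alman2017probabilistic}, so the task is really to reconstruct its proof. The plan has three moving parts: (i) every bottom threshold gate of a circuit that reads both $\hat h$ and $\hat x$ is, after regrouping, a \emph{comparison} between a quantity that depends only on $h$ and one that depends only on $x$; (ii) such a comparison matrix is, up to permutation, a greater‑than matrix, which has small probabilistic rank over $\real$ via fingerprinting; and (iii) direct‑summing the $s$ resulting random feature maps gives one feature map that, with probability $\ge 1-\epsilon$ at each point, exactly reproduces the \emph{value} fed into the top threshold gate — so the top gate is free.

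First I would normalize. Write the depth‑$2$ threshold circuit of size $s$ computing $\texttt{Eval}(\cH)$ as $h(x) = \sign\!\big(\sum_{i=1}^{s} a_i\, g_i(\hat h,\hat x) - \Theta\big)$ with each bottom gate $g_i(\hat h,\hat x) = \sign(\langle u_i,\hat h\rangle + \langle v_i,\hat x\rangle - \theta_i)\in\{\pm1\}$ (perturbing $\Theta$ slightly so the outer argument is never $0$; the weights $a_i,u_i,v_i,\theta_i,\Theta$ are arbitrary reals fixed once and for all). The key observation is $g_i(\hat h,\hat x) = \sign\!\big(A_i(h) + B_i(x)\big)$ where $A_i(h) := \langle u_i,\hat h\rangle$ depends only on $h$ and takes at most $|\cH|$ distinct values as $h$ ranges over $\cH$, while $B_i(x) := \langle v_i,\hat x\rangle - \theta_i$ depends only on $x$ and takes at most $|X|$ distinct values. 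Relabelling each $h$ (resp.\ $x$) by the rank of $A_i(h)$ (resp.\ of $-B_i(x)$) inside its value set, the matrix $M_i\in\{\pm1\}^{|\cH|\times|X|}$ with $M_i[h,x]=g_i(h,x)$ becomes, up to row/column permutation, a submatrix of the greater‑than matrix $\mathrm{GT}_N$ with $N\le|\cH|\cdot|X|$. It is precisely this step that produces a $\log(|\cH||X|)$ rather than a bit‑length in the final bound, since we never care about the magnitude of the weights, only the number of values they induce.

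Second, I would invoke the probabilistic‑rank bound for $\mathrm{GT}_N$: for any $\epsilon'>0$ there is a distribution over pairs of maps $\alpha:[N]\to\real^{r}$, $\beta:[N]\to\real^{r}$ with $r = \poly(\log N,\,1/\epsilon')$ (a bound of the form $O(\log^2 N/\epsilon')$ suffices for the stated conclusion; tighter bounds are known) such that for every fixed pair of indices, $\langle\alpha(\cdot),\beta(\cdot)\rangle$ equals the $\mathrm{GT}$ value with probability $\ge 1-\epsilon'$, and on the failure event outputs a bounded ``don't‑know'' value rather than the opposite sign. The construction is the standard divide‑and‑conquer fingerprinting: $\mathrm{GT}(p,q)$ is an OR over the $O(\log N)$ bit positions $j$ of the events ``bit $j$ of $p$ exceeds bit $j$ of $q$ and the prefixes of $p,q$ above position $j$ agree'', each prefix‑equality test is a random hash contributing a small‑rank one‑sided‑error matrix, and the $O(\log N)$‑level AND/OR structure together with $O(\log(1/\epsilon'))$ repetitions for error reduction yield the bound. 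I would apply this to each $M_i$ with $\epsilon' = \epsilon/s$, obtaining for each $i$ a random rank‑$r_i$ matrix $R_i[h,x] = \langle\alpha_i(h),\beta_i(x)\rangle$ with $r_i = O\big(s\log^2(|\cH||X|)/\epsilon\big)$, whose $h$‑part $\alpha_i$ depends only on $h$ and shared randomness and whose $x$‑part $\beta_i$ depends only on $x$ and shared randomness, that matches $M_i$ at any fixed entry with probability $\ge 1-\epsilon/s$ and never emits the wrong $\pm1$ value.

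Finally I would assemble the feature map. Let $\cE$ sample all the fingerprinting randomness for the $s$ gates jointly and output $\phi(x) = \big(1,\,\beta_1(x),\dots,\beta_s(x)\big)\in\real^{d}$ with $d = 1+\sum_{i=1}^s r_i = O\big(s^2\log^2(|\cH||X|)/\epsilon\big)$; note $\phi$ does not depend on $h$. For a concept $h$ take the weight vector $w_h = \big(-\Theta,\,a_1\alpha_1(h),\dots,a_s\alpha_s(h)\big)$, so that $\langle w_h,\phi(x)\rangle = \sum_i a_i R_i[h,x] - \Theta$. Whenever all $R_i$ succeed at $(h,x)$ — an event of probability $\ge 1-\epsilon$ at any fixed $(h,x)$ by a union bound over the $s$ gates — this equals $\sum_i a_i g_i(h,x)-\Theta$, hence $\sign(\langle w_h,\phi(x)\rangle) = h(x)$. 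Therefore, for every $h$ and every $\rho$, swapping the two expectations, $\mathbb{E}_{\phi\sim\cE}\big[\inf_{w}\cL^{\cD_{h,\rho}}_{01}(\langle w,\phi\rangle)\big] \le \mathbb{E}_{\phi\sim\cE}\big[\Pr_{x\sim\rho}[\sign(\langle w_h,\phi(x)\rangle)\ne h(x)]\big] = \mathbb{E}_{x\sim\rho}\big[\Pr_{\phi}[\exists i:\ R_i[h,x]\ne g_i(h,x)]\big] \le \epsilon$, which is the definition of $\dceps{\cH}\le d$ (the $\inf_w$ is taken after $\phi$ is drawn, so letting $w=w_h$ depend on the realized $\phi$ is legitimate). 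The one genuinely delicate point, and the place I would spend the most care, is step (iii): I must guarantee that on the ``all gates succeed'' event the value entering the top gate is reproduced \emph{exactly}, which is why the bottom‑gate probabilistic matrices are engineered to have one‑sided error (correct‑or‑don't‑know, never wrong) — this lets a plain union bound over the $s$ gates replace any robustness analysis of the arbitrary‑weight top LTF, which could otherwise be brittle. The $\mathrm{GT}$ probabilistic‑rank bound is the other substantive ingredient, but it is a classical fingerprinting argument.
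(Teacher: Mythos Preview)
The paper does not give its own proof of this lemma; it is quoted verbatim as a result of \citet{alman2017probabilistic} and then applied as a black box in the proof of Theorem~\ref{thm:comp_barrier}. So there is nothing in the paper to compare your argument against.

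That said, your reconstruction is a faithful outline of the Alman--Williams argument: (i) each bottom LTF separates as $\sign(A_i(h)-(-B_i(x)))$ and hence its $|\cH|\times|X|$ truth table embeds into a greater-than matrix on the merged value set of size at most $|\cH|+|X|$; (ii) $\mathrm{GT}_N$ has probabilistic rank $\poly(\log N,1/\epsilon')$; (iii) direct-summing the $s$ gate-level feature maps with $\epsilon'=\epsilon/s$ and a union bound yields the stated $O(s^{2}\log^{2}(|\cH||X|)/\epsilon)$. One small over-complication: you do not need one-sided (``correct-or-don't-know'') error for the bottom-gate probabilistic matrices. On the event that \emph{all} $s$ random matrices agree with their targets at $(h,x)$ --- which has probability $\ge 1-\epsilon$ by the union bound regardless of what happens on failure --- the value $\sum_i a_i R_i[h,x]-\Theta$ equals $\sum_i a_i g_i(h,x)-\Theta$ exactly, so its sign is $h(x)$; the remaining $\le\epsilon$ mass is simply charged to the loss. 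The brittleness of the top LTF never enters, because you never analyze the perturbed sum.
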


\begin{proof}[Proof of Theorem \ref{thm:comp_barrier}]

If it is true that, for $b \le 2^{n^{1-\hat{\epsilon}}}$, for every $\cH$ and $\mu$ over $\cH$, there exists some constant $\gamma>0$ such that,
\begin{align*}\label{eq:vc_superpoly}
\vcd{\cH}^\gamma \le \bnadc{\mu}
\end{align*}
then, if we prove that there exists $\cH, \mu$ such that
\[
\bnadc{\mu}^{\omega(1)} < \dceps{\cH}
\]
then we deduce that,
\begin{align}\label{eq:goal}
\vcd{\cH}^{\omega(1)} \le \dceps{\cH}
\end{align}
When (\ref{eq:goal}) holds, we can apply Lemma \ref{lemma:alman} to conclude that 
\begin{align*}
\vcd{\cH}^{\omega(1)} \le O\left(\frac{s^2\log^2(|\cH| \cdot |X||)} {\epsilon}\right)
\end{align*}
where $s$ is circuit size of a depth-2 threshold circuit computing $\texttt{Eval}({\cH})$.
Choosing $\cH$ appropriately, this implies a super-polynomial lower bound on $s$ for $\texttt{Eval}({\cH})$.

Thus, it remains to show that for every $\cH$, and $\mu$ over $\cH$, there exists some constant $\gamma>0$ such that,
\begin{align*}
\vcd{\cH}^\gamma \le \bnadc{\mu}
\end{align*}

To prove this, we negate towards contradiction. Suppose that there exists $\cH, \mu$ such that for every constant $\gamma > 0$, it holds that $\vcd{\cH}^\gamma > \bnadc{\mu}$. Then, for $\cH$ the class of parities, and $\mu$ the uniform distribution, we get that for every $\gamma$, $\bnadc{\mu} < \vcd{\cH}^\gamma < n^\gamma$. Using this bound on $\bnadc{\mu}$, we can now derive a distributional communication protocol for parities, whose complexity violates known lower bounds of $\Omega(n)$. 

The protocol for computing the communication matrix $A(h,x)$ for parities works as follows. The player who owns $x$ samples the embedding $\phi \sim \mathcal{E}$ guaranteed by $\bnadc{\mu} < n^\gamma$. The player then computes $\phi(x)$, and then sends the results to the player who owns $h$. This player then computes the linear combination of $\phi(x)$, which $\epsilon$-approximates $h(x)$, which is also guaranteed by $\bnadc{\mu} < n^\gamma$.

Since $\bnadc{\mu} < n^\gamma$, this protocol for $A$ gives that $A \in \distCC{c}$, for $c = \log(b)$ (see section \ref{apx:cc} for definition of $\distCC{c}$). Finally, since we stipulated that $b \le 2^{n^{1-\hat{\epsilon}}}$, we see that $A \in \distCC{c}$ for $c = O(n^{1-\hat{\epsilon}})$, which gives the desired contradiction, and concludes the theorem.

\end{proof}

\paragraph{Remark.} We note that, by inspection of Adaboost, the weights of the linear combination in our construction in Theorem \ref{thm:main_meta} are bounded. Hence, the above theorem indicates that if we want to avoid the complexity-theoretic barrier, then we need a different technique or construction to prove that our construction and relaxation was necessary to get the infinite separation. 

\section{Communication Complexity Preliminaries}\label{apx:cc}

Let $A$ be a sign matrix (one with entries taking only values in $\{\pm 1\}$). We define the discrepancy of a sign matrix. Let $X,Y$, be two sets indexing the rows and columns of $A$. We write $A(x; y)$ to denote the entry of $A$ indexed by $x \in X,y \in Y$. A \textit{rectangle} $R$ is the set $B \times C$ for $B \subseteq X$, $C \subseteq Y$. For a fixed distribution $\zeta$ over $ X \times Y$, the discrepancy of $A$ with respect to $\zeta$ is defined as
\[
\disc{\zeta}{A} \triangleq \max_{R} \left| \sum\limits_{(x,y) \in R} \zeta(x,y) \cdot A(x; y) \right|
\]
As a special case, we define discrepancy over \textit{product distributions}

\[
\discprod{A} \triangleq \min_{\zeta} \disc{\zeta}{A}
\]
subject to the constraint that $\zeta$ is product distribution over $X \times Y$.

\subsection{Communication Models}

It is well known that discrepancy of a sign matrix is related to the communication complexity of the sign matrix. We define models of communication now.

The $2$-party communication model is the following. There are $2$ parties, each having unbounded computational power, who try to collectively compute a function. The input to the function is separated into $2$ segments, and the $i^{th}$ party sees the $i^{th}$ segment. The parties can send each other direct messages.

Each party may transmit messages according to a fixed protocol. The protocol determines, for every sequence of bits transmitted up to that point (the transcript), whether the protocol is finished (as a function of the transcript), or if, and which, party writes next (as a function of the transcript) and what that party transmits
(as a function of the transcript and the input of that party). Finally, the last bit transmitted is the output of the protocol, which is a value in $\{\pm 1\}$. The complexity measure of the protocol is the total number of bits transmitted by the parties.

\begin{definition}[$\dCC{c}$ class]
$\dCC{c}$ is defined to be the class of functions $f: \{0,1\} \times \{0,1\} \rightarrow \{\pm 1\}$ that can be computed by a $2$-party deterministic communication protocol with complexity $c$.
\end{definition}

Frequently it is useful to think of $f$ as being represented by a sign matrix, where $A(x;y) \triangleq f(x,y)$. Hence, we may abuse notation and write that $A \in \dCC{c}$.



A model more relaxed than deterministic communication is \textit{distributional} communication.

\begin{definition}[Distributional $\dCC{c}$]
The distributional $2$-party communication model allows the protocol to err on certain inputs. Fix a distribution $\rho$ over $\{0,1\} \times \{0,1\}$. A function $f: \{0,1\} \times \{0,1\} \rightarrow \{\pm 1\}$ is in $\distCC{c}$ if there exists a communication protocol $\pi \in \dCC{c}$ such that
\[
\Ex{(x_1, x_2) \sim \rho}{\pi(x_1, x_2) \cdot f(x_1, x_2)} \ge 2\gamma
\]
\end{definition}

Distributional communication complexity can be thought of as correlation with $\dCC{c}$.

\begin{definition}[Boolean function correlation]
    Define $\corr{\Lambda} \triangleq \max_{h \in \Lambda}|\ex{f(x) \cdot h(x)}|$, where $x$ is sampled from a distribution $\zeta$ over the domain.
\end{definition}

When we want to measure correlation between two function classes, we have it defined as follows:

\begin{definition}[Boolean function correlation]
    Define $\corrClass{\cC}{\Lambda} \triangleq \min_{f \in \cC}\max_{h \in \Lambda}|\ex{f(x) \cdot h(x)}|$, where $x$ is sampled from a distribution $\zeta$ over the domain.
\end{definition}

\subsection{Discrepancy vs. 2-bit Communication}

\begin{proof}[Proof of Lemma \ref{lemma:discrepancy_to_correlation}]
    
Let $R'$ be the rectangle that witnesses the maximum value of 
    \[
        \left| \sum_{(x,y) \in R'} \zeta'(x,y) \cdot A(x; y) \right|
    \]
    with respect to the worst-case product distribution $\zeta'$. We construct $\pi$ as follows. Alice and Bob, receiving as input $x$ and $y$ respectively, which are sampled according to $\zeta'$, each send a bit indicating whether or not their respective input is contained in $R'$. If both inputs are contained in $R'$, then they output a bit according to the bias over $A$ projected to $R'$ induced by $\zeta'$. Otherwise, they output a random bit.

    First, observe that the output of this protocol by definition satisfies
    \begin{align*}
     \left|\Ex{(x,y) \sim \zeta'}{\pi(x,y)A(x; y)}\right| 
     &= \sum\limits_{(x,y) \in R}\zeta'(x,y) \cdot \Ex{\pi}{\pi(x,y) \cdot A(x;y)}
    \end{align*}
    Now, with probability $(p+1)/2$, for a certain $p \in [-1,1]$, $\pi(x,y) = A(x;y)$. Thus, for $(x,y) \in R$, $\pi(x,y) \cdot A(x;y) = 1$ with probability $(p+1)/2$. Therefore $\Ex{\pi}{\pi(x,y) \cdot A(x;y)} = (p+1)/2 - (1-(p+1)/2) = (p+1)/2 - 1+(p+1)/2) = p$. Hence, 

    \begin{align*}
     \left|\Ex{(x,y) \sim \zeta'}{\pi(x,y)A(x; y)}\right| 
     &= p\sum\limits_{(x,y) \in R}\zeta'(x,y)
    \end{align*}
    Note that $p$ is chosen in the proposed protocol specifically so that this implies:
    \begin{align*}
     \left|\Ex{(x,y) \sim \zeta'}{\pi(x,y) \cdot A(x; y)}\right| 
     &= \sum\limits_{(x,y) \in R}\zeta'(x,y)A(x;y)
    \end{align*}
    Recalling the definition of discrepancy:
    \[
    \discprod{A} = \left| \sum\limits_{(x,y) \in R'} \zeta'(x,y) \cdot A(x; y) \right| 
    \] 
    If we apply this definition and the conditions of the theorem, we then get that:
    \begin{align*}
     \left|\Ex{(x,y) \sim \zeta'}{\pi(x,y)A(x; y)}\right| 
     &\ge \gamma
    \end{align*}
    
\end{proof}

\subsection{2-party Norm}

Additionally, an important quantity we will consider is the $2$-party norm  of a function, $R_2(f)$, which is defined to be the expected product of a function computed on a list of correlated inputs. 

\begin{definition}[$2$-party norm]
For $f: \{0,1\} \times \{0,1\} \rightarrow \{\pm 1\}$, the $2$-party norm of $f$ is defined as
\begin{equation}\label{intro:2partynorm}
    R_2(f) \triangleq \Ex{x^0_1, x^0_2, x^1_1, x^1_2 \sim \{0,1\}^n}{\prod_{\epsilon_1,\epsilon_2 \in \{0,1\}} f(x^{\epsilon_1}_1, x^{\epsilon_2}_2)}
\end{equation}
\end{definition}


For us, the most important property of $R_2(f)$ is that it upper bounds the correlation of $f$ with functions computable by deterministic $2$-party communication protocols. We denote by $\dCC{c}$ the set of all $f: \{0,1\} \times \{0,1\} \rightarrow \{\pm 1\}$ that have deterministic 2-party communication protocols with cost at most $c$. 

\citet{chung1993communication, raz2000bns, viola2007norms} all demonstrate the following bound: 
\begin{theorem}
\label{thm:corrBound}
For every function $f : \{0,1\} \times \{0,1\} \rightarrow \{\pm 1\}$, 
\begin{equation*}
    \corr{\dCC{c}} = \max\limits_{\pi \in \dCC{c}}\left|\Ex{x}{f(x) \cdot \pi(x)}\right| \le 2^c \cdot R_2(f)^{1/4}
\end{equation*}
for $x$ uniformly distributed over $\{0,1\} \times \{0,1\}$.
\end{theorem}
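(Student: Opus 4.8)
The plan is to give the classical ``cube norm'' / BNS-type argument of \citet{chung1993communication, raz2000bns, viola2007norms}. It has two ingredients: a structural reduction from a cheap deterministic protocol to a short signed sum of rectangle indicators, and a double application of Cauchy--Schwarz that bounds the correlation of $f$ with any single combinatorial rectangle by $R_2(f)^{1/4}$.

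First I would fix an arbitrary $\pi \in \dCC{c}$. A deterministic $c$-bit protocol partitions the input domain into at most $2^c$ combinatorial rectangles $R_1, \dots, R_m$ with $m \le 2^c$, and on each $R_i$ the transcript --- hence the output --- is a fixed sign $\sigma_i \in \{\pm 1\}$; thus $\pi = \sum_{i \le m} \sigma_i \mathbf{1}_{R_i}$ pointwise. By the triangle inequality,
\[
\left| \Ex{x}{f(x)\pi(x)} \right| \;\le\; \sum_{i \le m} \left| \Ex{x}{f(x)\,\mathbf{1}_{R_i}(x)} \right| \;\le\; 2^c \cdot \max_{R = A \times B} \left| \Ex{x}{f(x)\,\mathbf{1}_{R}(x)} \right|,
\]
so it suffices to prove $\left| \Ex{x}{f(x)\,\mathbf{1}_{A \times B}(x)} \right| \le R_2(f)^{1/4}$ for an arbitrary rectangle $A \times B$.

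For this, write $x = (x_1, x_2)$ and set $g(x_1) = \mathbf{1}_A(x_1)$, $h(x_2) = \mathbf{1}_B(x_2)$, both valued in $[0,1]$. Cauchy--Schwarz in $x_1$ (using $0 \le g \le 1$) gives $\bigl(\Ex{x}{f g h}\bigr)^2 \le \Ex{x_1}{\bigl(\Ex{x_2}{f(x_1,x_2)\,h(x_2)}\bigr)^2}$, and expanding the inner square with an independent copy $x_2'$ of $x_2$ rewrites the right-hand side as $\Ex{x_1,x_2,x_2'}{h(x_2)h(x_2')\,f(x_1,x_2)f(x_1,x_2')}$. A second Cauchy--Schwarz, now in the pair $(x_2,x_2')$ (using $0 \le h \le 1$), bounds the square of this by $\Ex{x_2,x_2'}{\bigl(\Ex{x_1}{f(x_1,x_2)f(x_1,x_2')}\bigr)^2}$, which after expanding once more with an independent copy $x_1'$ of $x_1$ equals $\Ex{x_1,x_1',x_2,x_2'}{f(x_1,x_2)f(x_1',x_2)f(x_1,x_2')f(x_1',x_2')}$, i.e.\ exactly $R_2(f)$. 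Chaining the two estimates (both sides being nonnegative at each step) gives $\bigl(\Ex{x}{f g h}\bigr)^4 \le R_2(f)$; combining with the previous display and taking the maximum over $\pi \in \dCC{c}$ yields $\corr{\dCC{c}} \le 2^c\, R_2(f)^{1/4}$.

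The statement is textbook, so there is no genuine obstacle; the only care required is in the bookkeeping. One must check that a $c$-bit deterministic protocol truly induces a rectangle \emph{partition} of size at most $2^c$ (so the $2^c$ factor is produced cleanly by the triangle inequality), and one must track the exponents through the two Cauchy--Schwarz steps ($2 \mapsto 2 \mapsto 4$) while verifying that the $[0,1]$-valued indicators are each dominated by $1$ and therefore drop out, leaving precisely the four-fold product that defines $R_2(f)$. The same argument applies verbatim when $x$ is drawn from any product distribution, with $R_2$ taken with respect to that distribution --- which is the form in which the theorem is actually invoked in the proof of Lemma~\ref{lemma:RFL}.
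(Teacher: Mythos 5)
Your proof is correct. The paper does not prove this statement itself---Theorem \ref{thm:corrBound} is quoted as a known bound from the cited works of Chung--Tetali, Raz, and Viola--Wigderson---and your argument (decomposing the $c$-bit protocol into at most $2^c$ leaf rectangles to extract the $2^c$ factor, then two applications of Cauchy--Schwarz to bound the correlation with a single rectangle by $R_2(f)^{1/4}$) is exactly the standard BNS-type proof those references give.
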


\end{document}